\documentclass[10pt]{article}

\usepackage[margin=1in]{geometry}
\usepackage{parskip}
\usepackage[utf8]{inputenc}
\usepackage[T1]{fontenc}
\usepackage{hyperref}
\usepackage{url}
\usepackage{booktabs}
\usepackage{amsfonts}
\usepackage{amsmath}
\usepackage{dsfont}
\usepackage{amsmath}
\usepackage{amsthm}
\usepackage{amssymb}
\usepackage{enumerate}
\usepackage{subcaption}
\usepackage{bm}
\usepackage{graphicx}
\usepackage{nicefrac}
\usepackage{microtype}
\usepackage{algorithm}
\usepackage[noend]{algorithmic}
\usepackage{cleveref}
\usepackage{natbib}
\usepackage[table]{xcolor}
\usepackage{tikz}
\renewcommand{\epsilon}{\varepsilon}

\newtheorem{lemma}{Lemma}
\newtheorem{theorem}{Theorem}
\newtheorem{corollary}{Corollary}
\newtheorem{definition}{Definition}
\newtheorem{proposition}{Proposition}
\newtheorem{remark}[theorem]{Remark}
\crefname{remark}{Remark}{Remarks}

\crefname{observation}{Observation}{Observations}
\newtheorem{assumption}{Assumption}
\newenvironment{ack}{\section*{Acknowledgments}}{}

\newcommand{\E}{\mathop{\mathbb{E}}}
\newcommand{\mP}{\mathop{\mathbb{P}}}

\newcommand{\bB}{\mathbb{B}}

\newcommand{\cV}{\mathcal{V}}

\newtheoremstyle{TheoremNum}
{\topsep}{\topsep}              
{\itshape}                      
{}                              
{\bfseries}                     
{.}                             
{ }                             
{\thmname{#1}\thmnote{ \bfseries #3}}
\theoremstyle{TheoremNum}

\newcommand{\eps}{\epsilon}

\newcommand{\R}{\mathbb{R}}

\newcommand{\cZ}{\mathcal{Z}}

\newcommand{\WW}{\mathcal{W}}
\newcommand{\alg}{\mathcal{A}}
\newcommand{\talg}{\tilde{\alg}}
\newcommand{\Vol}{\text{Vol}}

\newcommand{\Unif}{\mathrm{Unif}}
\newcommand{\argmin}{\text{argmin}}
\newcommand{\ZZ}{\mathcal{Z}}
\renewcommand{\varnothing}{\emptyset}

\title{Near-Optimal Machine Unlearning Utility for Smooth Strongly Convex Losses}

\author{
    Matthew Regehr\thanks{University of Waterloo. \texttt{matt.regehr@uwaterloo.ca}.} \and
    Gautam Kamath\thanks{University of Waterloo and Vector Institute. \texttt{g@csail.mit.edu}.} \and
    Andrew Lowy\thanks{CISPA Helmholtz Center for Information Security. \texttt{lowy@cispa.de}.} 
}

\begin{document}

\maketitle

\begin{abstract}
    Machine unlearning is motivated by legal and user-facing requirements to remove the influence of individuals' data from trained models, such as the right to be forgotten. Prior work has developed algorithms and error bounds for unlearning in smooth strongly convex stochastic optimization but the fundamental statistical cost of unlearning has remained unclear. We nearly resolve this problem by proving upper and lower bounds on the excess population risk of approximate $(\varepsilon, \delta)$-unlearning; our bounds are tight up to a condition-number factor. For mean estimation over the unit ball, our upper and lower bounds match. In fact, our algorithm achieves $\eps$-unlearning, which implies a notable separation between differential privacy and unlearning: $(\eps, \delta)$-unlearning has no statistical advantage over pure $\eps$-unlearning.

    The optimal rate is the usual sampling error plus an unlearning penalty that interpolates between the retraining from scratch rate and an exponentially smaller term as $\varepsilon/d$ grows, where $d$ is the dimension of the model. The retraining penalty dominates the sampling error for large unlearning requests. In particular, retraining from scratch is information theoretically optimal up to $\eps \lesssim d$. On the other hand, for $\varepsilon \gg d$ and large unlearning requests, our $\eps$-unlearning algorithm offers an exponential accuracy improvement over retraining the model from scratch and differentially private baselines.
\end{abstract}

\section{Introduction}

Machine unlearning~\cite{cao2015towards} is motivated by legal, institutional, and user-facing requirements to remove the influence of individuals' data from trained models, such as the right to be forgotten~\cite{gdpr2016,guo2020certified}. Given a model trained on a dataset, an unlearning procedure receives a request to ``delete'' or ``unlearn'' a subset of the training samples and must update the model so that the result is statistically close to what would have been produced had those samples never been used.  

We consider unlearning in stochastic convex optimization (SCO). Given i.i.d.\ samples
\(Z=(z_1,\ldots,z_n)\sim P^n\), the goal is to approximately minimize the population loss function
\begin{equation}
    \label{eq:sco}
    \min_{w\in\WW} \left\{ F_P(w) := \E_{z\sim P}[f(w,z)] \right\},
\end{equation}
where \(\WW\subseteq\R^d\) is a convex parameter domain and \(f(\cdot,z)\) is a loss function. The quality of a solution \(w\) is measured by its \emph{excess population risk}
$\Delta F_P(w) := F_P(w) - F_P^*$, where~$F_P^* := \min_{w'\in\WW} F_P(w').$ In the unlearning setting, a learning algorithm \(\alg\) first receives the full dataset \(Z\) and outputs a model, possibly together with side information. Later, an unlearning algorithm \(\talg\) receives an unlearning request \(U\subseteq Z\), with \(|U|\le m\), and must output an updated model. Informally, approximate unlearning requires that this updated model be statistically difficult to distinguish (in the max-divergence sense, \`a la differential privacy) from the output of the learning algorithm run directly on the retained dataset \(Z\setminus U\).

There are two na\"ive baseline methods for unlearning. First, \textit{differential privacy} (DP)~\cite{dwork2006calibrating} automatically gives unlearning: if the original training algorithm is private enough to hide the contribution of any possible unlearning set, then no special update is needed at unlearning time. However, this can be overly conservative, since a DP algorithm must hide all possible changes in advance, before the unlearning set is known; it does not leverage knowledge of $U$. Second, discarding the trained model and \textit{retraining the model from scratch} on \(Z\setminus U\) gives exact \(0\)-unlearning; but it discards part of the dataset and intuitively seems inefficient from a utility perspective. Can we improve over these two baseline approaches?



In this paper, we investigate the \textit{smallest excess population risk that is achievable} in smooth strongly convex SCO subject to this approximate unlearning constraint. While considerable attention in the SCO unlearning literature has been devoted to computational and storage considerations, it is difficult to cleanly track progress on that front due to widely varying utility rates. It is especially difficult to distinguish inherent error from errors arising from a particular computational approach. Therefore, establishing the minimax rate for SCO unlearning provides a clear target and baseline from which to compare algorithms on computational terms. This mirrors a long line of research on private SCO in which \cite{bft19} first establish the optimal rate for private SCO but with suboptimal time complexity. Subsequent work builds upon this foundation to achieve the optimal rate by faster algorithms and in increasing generality \cite{FeldmanKoTa20,bfgt20,AsiFeKoTa21,AsiLeDu21}.

A recent line of work develops algorithms and excess-risk upper bounds for unlearning in smooth strongly convex stochastic optimization~\cite{sekhari2021remember,youseffutility,van2025forget,zou2025certified,qiao2025hessian}.
These works initiate the study of utility, computation, and storage tradeoffs under different assumptions and show that unlearning can improve over differentially private baselines in certain regimes. However, no excess risk bounds were given for the retraining from scratch (RFS) baseline in prior work and it was unclear whether RFS is improvable. In particular, the following fundamental question has remained open:

\begin{center}
    \fbox{
    \parbox{0.78\linewidth}{
    \textbf{Question.}
    What is the minimax optimal excess population risk for \((\varepsilon, \delta)\)-unlearning up to \(m\) samples in smooth strongly convex stochastic optimization?
    }}
\end{center}

\paragraph{Our contributions.}
We resolve the question above up to a condition-number factor. Our upper and lower bounds characterize the minimax optimal excess-risk for smooth strongly convex stochastic optimization as
\begin{equation}
    \label{eq:informal-rate}
    \frac{1}{n}
    +
    \left(\frac{m}{n}\right)^2
    e^{-2\varepsilon/(d+2)}.
\end{equation}

The rate~\eqref{eq:informal-rate} has a natural interpretation and yields structural insights that may be relevant to the broader unlearning program. The \(1/n\) term is the usual sampling error present even without unlearning requirements~\cite{ny83}, while the second term is the unlearning penalty. Our analysis shows that RFS achieves unlearning penalty \((m/n)^2\), which improves over the standard DP baseline. It is immediate that this is statistically optimal for exact $0$-unlearning and our lower bound shows that this is unimprovable for \(\varepsilon\lesssim d\). That is, RFS is information theoretically optimal up to relatively weak unlearning constraints. On the other hand, for a weaker unlearning constraint \(\varepsilon \gg d\), our algorithm can leverage information from the unlearned data to reduce the unlearning penalty by the exponential factor \(\exp(-2\varepsilon/(d+2))\). Thus, for large unlearning requests $m \gg \sqrt{n}$ and a weak unlearning constraint $\varepsilon \gg d$, information theoretically optimal unlearning is exponentially more accurate than RFS, DP-based unlearning, and prior unlearning algorithms. These comparisons are summarized in \Cref{tab:informal-comparison}, which reports the closest apples-to-apples baselines for our rate comparison; broader related work is discussed in Appendix~\ref{app:related-work}.

Our main contributions are:
\begin{enumerate}
    \item \textbf{Nearly tight minimax bounds for smooth strongly convex SCO.}
    We give an \((\varepsilon, 0)\)-unlearning algorithm for smooth strongly convex stochastic optimization whose excess risk matches~\eqref{eq:informal-rate} up to a condition-number factor. We complement this with a lower bound showing that no unlearning algorithm can improve the dependence on \(n,m,\varepsilon, \delta,d\).

    \item \textbf{A sharp characterization for mean estimation.}
    For mean estimation over the unit ball in \(\R^d\), our upper and lower bounds match up to constants.

    \item \textbf{Sharper analyses of baseline and prior algorithms.}
    We give a sharp analysis of the exact \(0\)-unlearning baseline retrain-from-scratch, showing that it achieves \(1/n+(m/n)^2\) excess risk and is optimal when \(\varepsilon\lesssim d\). In the appendix, we sharpen the analysis of the Newton-step algorithm of~\cite{sekhari2021remember} to get a quadratic improvement in the unlearning error term. We also identify a bug in the utility analysis of \cite{youseffutility} and give a corrected guarantee for their warm-start ERM procedure.
\end{enumerate}

Our lower bounds apply to all \((\varepsilon, \delta)\)-unlearning algorithms, unlike the DP-based lower bounds of \cite{huang2023tight}. In particular, allowing $\delta > 0$ offers no meaningful utility improvements for unlearning. This is in stark contrast to differential privacy, where many statistical tasks exhibit a significant separation between the $\delta = 0$ and $\delta > 0$ settings (see e.g. \cite{su16}).

\begin{table}[t]
    \centering
    \caption{Excess-risk rates for \(\varepsilon\)-unlearning up to \(m\) points. We suppress logarithmic, Lipschitz, strong-convexity, smoothness, and condition-number factors. For prior algorithms, we report the sharpened or corrected rates proved in the Appendix.}
    \label{tab:informal-comparison}
    \footnotesize
    \begin{tabular}{p{0.31\linewidth}p{0.34\linewidth}p{0.25\linewidth}}
        \toprule
        Method / result & Excess-risk rate & Reference / comments \\
        \midrule

        DP baseline &
        \(\displaystyle
        \frac{1}{n}
        +
        \frac{d^2m^2}{\varepsilon^2 n^2}
        \) &
        Group DP and~\cite{AsiLeDu21} \\[0.8em]

        Retrain from scratch (RFS) &
        \(\displaystyle
        \frac{1}{n}
        +
        \left(\frac{m}{n}\right)^2
        \) &
        Exact \(0\)-unlearning; \Cref{thm:rfs} \\[0.8em]

        ERM / warm-start unlearning &
        \(\displaystyle
        \frac{1}{n}
        +
        \left(\frac{m}{n}\right)^2
        \) &
        \cite{youseffutility}; corrected in \Cref{thm:youseff-corrected}; approximate unlearning; logarithmic speedup over RFS \\[0.8em]

        Newton-step unlearning &
        \(\displaystyle
        \frac{1}{n}
        +
        \left(\frac{m}{n}\right)^2
        +
        \frac{d^2m^4}{\varepsilon^2 n^4}
        \) &
        \cite{sekhari2021remember}; sharpened in \Cref{thm:saks-improved}; requires Lipschitz Hessian \\[0.9em]



        \rowcolor{yellow!18}
        Our upper/lower bound &
        \(\displaystyle
        \frac{1}{n}
        +
        \left(\frac{m}{n}\right)^2
        e^{-2\varepsilon/(d+2)}
        \) &
        Optimal up to condition number; tight for mean estimation  \\
        \bottomrule
    \end{tabular}
\end{table}

\paragraph{Techniques.}
Our key algorithmic idea is a novel way to exploit information available at unlearning time. The empty-deletion run on a retained dataset \(S = Z\setminus U\) returns a point near \(\hat w_S\) with high probability, but also places a small amount of probability mass over a region covering all possible full-data ERM solutions. When an unlearning request \(U\) arrives, the unlearner forms \(S=Z\setminus U\), reconstructs this noise distribution by retraining from scratch.
It then moves the high-probability mass near the reduced-data ERM solution \(\hat w_{Z\setminus U}\) to the full-data ERM solution \(\hat w_Z\).
We calibrate the optimal geometry and probabilities needed to satisfy the \(\varepsilon\)-likelihood-ratio constraint while achieving an advantage over retraining from scratch by a factor of \(e^{-\Theta(\varepsilon/d)}\).

Our lower bound uses packing techniques, but departs from standard DP packing arguments. In unlearning, the algorithm may see the unlearning set and arbitrary side information, so DP-style neighboring-dataset indistinguishability alone is not enough. Instead, we construct many separated mean-estimation instances whose nonzero signal samples can be deleted with high probability. After deletion, the retained datasets collapse to a common reference dataset, so \(\varepsilon\)-unlearning forces the output distributions for all packed instances to be close to one common reference distribution. A packing argument then yields the matching \(e^{-\Theta(\varepsilon/d)}\) unlearning penalty.

\subsection{Preliminaries}

Let \(\|\cdot\|\) denote the Euclidean norm. Let $\bB_d$ denote the $d$-dimensional Euclidean unit ball and let \(\WW\subseteq\R^d\) be a domain. For a differentiable function \(h:\WW\to\R\), we say \(h\) is \(L\)-\textit{Lipschitz} if \(|h(w)-h(u)|\le L\|w-u\|\) for all \(w,u\in\WW\); \(\mu\)-\textit{strongly convex} if \(h(u)\ge h(w)+\langle \nabla h(w),u-w\rangle+\frac{\mu}{2}\|u-w\|^2\) for all \(w,u\in\WW\); and \(\beta\)-\textit{smooth} if \(h(u)\le h(w)+\langle \nabla h(w),u-w\rangle+\frac{\beta}{2}\|u-w\|^2\) for all \(w,u\in\WW\). For smooth strongly convex losses, \(\kappa:=\beta/\mu\) denotes the \textit{condition number}.

We make the following assumption throughout, which is standard in the theoretical study of unlearning~\cite{sekhari2021remember,youseffutility}. 
Unlike~\cite{sekhari2021remember}, we do not require that Hessian of the loss is Lipschitz.

\begin{assumption}[Smooth strongly convex SCO]
    \label{ass:smooth-sc-sco}
    The following conditions hold:
    \begin{enumerate}
        \item The parameter domain \(\WW\subseteq\R^d\) is closed, convex, and has finite diameter \(D = \sup_{w,w'\in\WW}\|w-w'\|\).
        \item For every \(z\in\ZZ\), the loss \(f(\cdot,z):\WW\to\R\) is differentiable and \(L\)-Lipschitz on \(\WW\).
        \item For every \(z\in\ZZ\), the loss \(f(\cdot,z)\) is \(\mu\)-strongly convex on \(\WW\).
        \item For every \(z\in\ZZ\), the loss \(f(\cdot,z)\) is \(\beta\)-smooth on \(\WW\).
        \item The population minimizer $w_P^* := \argmin_{w \in \WW} F_P(w)$ satisfies $\nabla F_P(w_P^*)=0$.
    \end{enumerate}
\end{assumption}

Moreover, we will fix throughout a dataset size $n \geq 1$ as well as unlearning capacity $1 \leq m < n$. As in \cite{sekhari2021remember}, non-trivial unlearning guarantees can only be provided in the regime where $m \leq cn$ for a constant $c < 1$. In particular, we assume throughout that $n - m = \Omega(n)$.
Now, for a dataset \(Z=(z_1,\ldots,z_n)\), let \(\widehat F_Z(w):=\frac1n\sum_{i=1}^n f(w,z_i)\) and, for an unlearning set \(U\subseteq Z\), let \(Z\setminus U\) denote the retained dataset.

\paragraph{Approximate Unlearning.} For random variables \(X,Y\) on the same measurable space, write \(X\approx_{\varepsilon,\delta}Y\) if for every measurable event \(E\), both \(\Pr(X\in E)\le e^\varepsilon\Pr(Y\in E)+\delta\) and \(\Pr(Y\in E)\le e^\varepsilon\Pr(X\in E)+\delta\). We follow the definition of approximate unlearning used in~\cite{sekhari2021remember} but do not impose any storage constraints on the unlearner. On input data $Z$, any learning algorithm $\alg(Z)$ is applied. Afterward, the unlearner $\talg$ takes as inputs the unlearning set $U$ and the dataset $Z$ and must return a model that is $\approx_{\varepsilon,\delta}$-close to the model that the unlearner would return given a learner input $Z \setminus U$ in the absence of any unlearning request. In general, the unlearner would also take the output of the learner $\alg(Z)$ as well as some stored information $T(Z)$ instead of necessarily the whole dataset $Z$. In our case, we allow the unlearner full storage, meaning $T(Z) = Z$. Thus, in our setting, the unlearner has access to the full data set $Z$ and the unlearning set $U$, as well as the learned model $\alg(Z)$.

\begin{definition}[Approximate \(\varepsilon\)-unlearning]
    \label{def:eps-unlearning}
    An algorithm \(\talg\) satisfies \((\varepsilon, \delta)\)-unlearning for up to \(m\) deletions if, for every dataset \(Z \in \ZZ^n\) and every deletion set \(U\subseteq Z\) with \(|U|\le m\),
    \[
        \talg(U, Z)
        \approx_{\varepsilon,\delta}
        \talg(\varnothing,Z \setminus U).
    \]
    If $\delta = 0$, we say that $\talg$ satisfies $\eps$-unlearning.
\end{definition}

We measure utility by \textit{worst-case expected excess population risk after up to \(m\) deletions}.

\begin{definition}
    \label{def:unlearning_utility}
    We say that $\talg$ achieves expected excess unlearning risk $\alpha$ if for every distribution $P \in \Delta(\ZZ)$ and $n - m \leq N \leq n$ and any adversarial unlearning request $U(Z) \subseteq Z$ such that $|Z \setminus U(Z)| \geq n - m$, we have
    \[
        \E_{\talg, Z \sim P^N}\left[ F_P(\talg(U(Z), Z)) - F_P^* \right] \leq \alpha.
    \]
\end{definition}

We note that our definition is slightly different than that of prior work including \cite{sekhari2021remember}, which only requires good performance on $\talg(U, Z), Z \sim P^n$, the model released after an unlearning request. Our stronger definition also requires good performance on $\talg(\varnothing, S), S \sim P^{n - m}$, the model trained on the dataset in which the unlearned data was not included to begin with. We present in \Cref{app:post_unlearning} analogous matching upper and lower bounds for the weaker definition more common in prior work. We argue that the optimal unlearner in that setting behaves in a way contrary to the spirit of machine unlearning and we therefore advocate the adoption of our stronger notion of utility.

\textit{Retraining from scratch (RFS)} is defined by $\talg(U, Z) := \argmin_{w \in \WW} \widehat{F}_{Z \setminus U}(w)$ in combination with empty unlearning request run $\talg(\varnothing, Z \setminus U) := \argmin_{w \in \WW} \widehat{F}_{Z\setminus U}(w)$; this satisfies exact \(0\)-unlearning by definition. 

Differential privacy gives another generic route: if \(\alg\) is $\eps$-DP for groups of size \(m\), then \(\alg(Z)\) is already $\eps$-close in distribution to \(\alg(Z\setminus U)\), so no unlearning-time update is needed (i.e., $\talg(U, Z) = \alg(Z)$). In particular, by group privacy, an \((\varepsilon/m)\)-DP algorithm at the individual-sample level yields \(\varepsilon\)-unlearning for unlearning sets of size at most \(m\).

\section{Upper Bounds for $\eps$-Unlearning}

In this section, we provide a novel algorithm for $\varepsilon$-unlearning that achieves excess population risk
\begin{align*}
    O\left(\kappa \frac{L^2}{\mu}\left(\frac{1}{n} + \left( \frac{m}{n} \right)^2 e^{-2\eps/(d+2)} \right)\right),
\end{align*}
for SCO and $O(1/n + (m/n)^2 e^{-2\eps/(d+2)})$ mean squared error for mean estimation on a unit ball. Thus, our algorithm gives an exponential improvement over the retrain-from-scratch penalty of $\Theta((m/n)^2)$ in the regime $\varepsilon = \Omega(d)$. These upper bounds are tight up to $O(\kappa)$ and $O(1)$ respectively. 

For any dataset \(S\), write $\widehat F_S(w):=\frac1{|S|}\sum_{z\in S} f(w,z)$ and $\hat w_S\in\argmin_{w\in\WW}\widehat F_S(w).$

\paragraph{Our algorithm.}
Consider the following two extreme algorithms for unlearning. To achieve high accuracy at the expense of providing no unlearning guarantees, an ERM unlearner given a dataset $Z$ and an unlearning request $U$ should simply ignore the request and return an approximate full-data ERM solution $\tilde w_Z$. On the other hand, to achieve a perfect unlearning guarantee at the expense of accuracy, the unlearner should retrain from scratch and return an approximate ERM solution on the reduced dataset $\tilde w_{Z \setminus U}$.

At a high-level, our ERM unlearner works by optimally interpolating between these two algorithms to provide the desired level of unlearning while maximizing the accuracy. More precisely, given a dataset $Z$ and a (possibly empty) unlearning request $U$, our unlearner returns with high probability a solution sampled uniformly from a small ball centered around the full-data ERM solution $\tilde w_Z$. However, when the learner is run ``dry'' on a reduced dataset $S$ and an empty unlearning request, the unlearner must provide plausible deniability to any true runs of the unlearner provided with a dataset $Z$ and an unlearning request $U$ that resolves to $S = Z \setminus U$. To achieve this, the unlearner also returns, with low probability, a sample from a wider ball centered around the reduced-data ERM $\tilde w_S$ that is just large enough to capture any full-data ERM solutions $\tilde w_Z$ that may reduce to $S = Z \setminus U$ by a legal unlearning request. We note that, critically, the approximate ERM solution $\tilde{w}_S$ is deterministic in the sense that it only depends on $Z$ and $U$ through $Z \setminus U$. \Cref{fig:unlearner} visualizes the sampling distributions of the unlearning process when run dry compared to a true run of the unlearner. \Cref{alg:core-swap} provides the pseudocode, which is written for a generic input dataset \(T\). In the true unlearning call, \(T = Z\) and the deletion set is \(U\). In the reference empty-deletion dry run, \(T = Z\setminus U\) and the deletion set is \(\varnothing\).

\begin{figure}[htbp]
    \centering
    \includegraphics[width=0.85\linewidth]{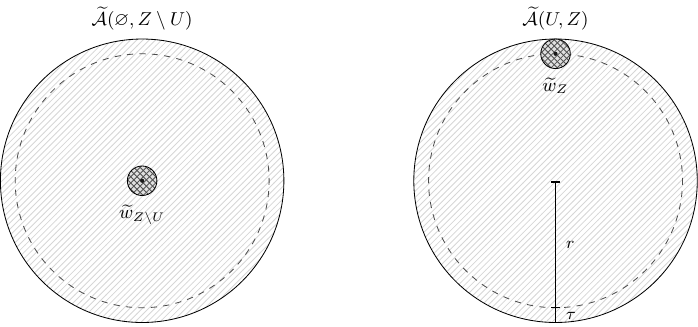}
    \caption{We compare the distribution of $\talg(\varnothing, Z \setminus U)$ to $\talg(U, Z)$. Lightly shaded regions are sampled uniformly with low probability and heavily shaded regions are sampled uniformly with high probability. With high probability, $\talg(\varnothing, Z \setminus U)$ returns a solution very close to the RFS ERM solution $\tilde{w}_{Z \setminus U}$ and, with small probability, a sample from a ball surrounding it. $\talg(U, Z)$ has the same distribution except that ball centered around the inaccurate RFS solution $\tilde{w}_{Z \setminus U}$ is shifted to capture the full ERM solution $\tilde{w}_Z$.}
    \label{fig:unlearner}
\end{figure}

\begin{algorithm}[t]
    \caption{Core-swap \(\varepsilon\)-unlearning for ERM}
    \label{alg:core-swap}
    \begin{algorithmic}[1]
        \REQUIRE Dataset \(T\); deletion set \(U\subseteq T\); parameters \(L,\mu,n,m,d,\varepsilon\)
        \STATE Set \(S\gets T\setminus U\), \(\rho \gets \frac{L}{\mu \sqrt{n}}\), \(r\gets \frac{2Lm}{\mu(n - m)} + 2\rho\), \(\tau\gets r\min\{1,2e^{-\varepsilon/(d + 2)}\}\), and \(\eta\gets \frac{1}{(e^\varepsilon - 1)\left( \frac{\tau}{\tau + r} \right)^d + 1}\)
        \STATE Compute deterministic $\tilde w_T$ and $\tilde w_S$ such that $\lVert \tilde w_T - \hat w_T \rVert, \lVert \tilde w_S - \hat w_S \rVert \leq \rho$
        \STATE Sample $w \sim
            \begin{cases}
                \Unif((\tau + r)\bB_d + \tilde w_S) & \text{w.p. } \eta \\
                \Unif(\tau\bB_d + \tilde w_T) & \text{w.p. } 1 - \eta
            \end{cases}
        $
        \RETURN $\argmin_{w' \in \WW} \lVert w' - w \rVert$
    \end{algorithmic}
\end{algorithm}

\paragraph{Guarantees of \Cref{alg:core-swap}.}
We now formally record our main upper bound guarantees. Note that by gradient query we mean a single evaluation of $\nabla_w f(w, z)$ and by projection query we mean a Euclidean projection onto the convex set $\WW$. In particular, under the assumption that gradient and projection queries can be evaluated in $O(d)$ time, the overall runtime of our algorithm is $\tilde O(\kappa \cdot nd)$.

\begin{theorem}[Main Upper Bound]
    \label{thm:core-swap-upper}
    Grant \Cref{ass:smooth-sc-sco}. Then the unlearner \Cref{alg:core-swap} satisfies
    \(\varepsilon\)-unlearning, requires $O(d)$ time for randomization as well as $\tilde O(\kappa \cdot n)$ gradient and projection queries. Moreover, \Cref{alg:core-swap} achieves expected excess unlearning risk
    \[
        \alpha
            =
            O\left(
            \kappa\frac{L^2}{\mu}
            \left(
                \frac1n+
                \left(\frac mn\right)^2e^{-2\eps/(d+2)}
            \right)
            \right).
    \]
    In the special case of mean estimation over the unit ball, i.e.
    \(\WW=\ZZ=\bB_d\) and \(f(w,z)=\frac12\|w-z\|^2\), \Cref{alg:core-swap} achieves expected excess unlearning risk
    \[
        \alpha = O\left(
            \frac1n+
            \left(\frac mn\right)^2e^{-2\eps/(d+2)}
        \right).
    \]
\end{theorem}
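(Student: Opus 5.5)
The proof splits into three parts: the privacy guarantee, the computational cost, and the utility bound.

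\textbf{Privacy.} Fix $Z$ and $U$ with $|U|\le m$, and set $S=Z\setminus U$. The true call $\talg(U,Z)$ and the dry run $\talg(\varnothing,S)$ share the wide-ball component $\Unif((\tau+r)\bB_d+\tilde w_S)$, because $\tilde w_S$ is deterministic given $S$. They differ only in their core balls: $\tau\bB_d+\tilde w_Z$ for the true call and $\tau\bB_d+\tilde w_S$ for the dry run. Since the final projection onto $\WW$ is a common deterministic post-processing, it suffices to compare the pre-projection densities.

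\emph{Containment.} By the standard ERM stability bound for $\mu$-strongly convex, $L$-Lipschitz losses, $\|\hat w_Z-\hat w_S\|\le 2Lm/(\mu n)\le 2Lm/(\mu(n-m))$. Adding the $2\rho$ slack for approximate solutions gives $\|\tilde w_Z-\tilde w_S\|\le r$. Hence both core balls lie inside the wide ball.

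\emph{Densities.} Write $V_a$ for the volume of a ball of radius $a$ in $\R^d$. On the wide ball outside both cores, the two densities are equal to $\eta/V_{\tau+r}$. On either core, one density is $\eta/V_{\tau+r}+(1-\eta)/V_\tau$ and the other is at least $\eta/V_{\tau+r}$. Their ratio is therefore at most
\[
1+\frac{1-\eta}{\eta}\left(\frac{\tau+r}{\tau}\right)^d = e^\varepsilon,
\]
where the equality is exactly the defining formula for $\eta$. This gives pure $\varepsilon$-unlearning.

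\textbf{Computation.} Obtaining $\tilde w_T$ and $\tilde w_S$ to accuracy $\rho$ takes $\tilde O(\kappa n)$ gradient and projection queries with projected gradient descent on the $\kappa$-conditioned empirical objective, which converges linearly. Sampling uniformly from a ball costs $O(d)$ time.

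\textbf{Utility.} I use $\beta$-smoothness together with $\nabla F_P(w_P^*)=0$ to write $\Delta F_P(w)\le\frac\beta2\|w-w_P^*\|^2$. Projection onto $\WW$ cannot increase the distance to $w_P^*\in\WW$. Definition~\ref{def:unlearning_utility} covers a dataset of size $N\in[n-m,n]$, with the algorithm run as $\talg(U,T)$ for $|T|=N$ and $|T\setminus U|\ge n-m$. Writing $w$ for the pre-projection sample, I split
\[
\E\|w-w_P^*\|^2\lesssim(1-\eta)\bigl(\tau^2+\E\|\tilde w_T-w_P^*\|^2\bigr)+\eta\bigl((\tau+r)^2+\E\|\tilde w_S-w_P^*\|^2\bigr).
\]

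\emph{Sampling error.} The standard bound $\E\|\hat w_T-w_P^*\|^2\lesssim L^2/(\mu^2 N)$ for strongly convex ERM, combined with $N,|S|\gtrsim n$ (which uses $n-m=\Omega(n)$) and $\rho^2=L^2/(\mu^2 n)$, gives the $1/n$ term.

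\emph{Unlearning penalty.} It remains to show $\tau^2+\eta(\tau+r)^2\lesssim \bigl(r^2\bigr)\,e^{-2\varepsilon/(d+2)}+\rho^2$, where $r^2\lesssim L^2m^2/(\mu^2n^2)+\rho^2$.
\begin{itemize}
\item If $2e^{-\varepsilon/(d+2)}\ge 1$, then $\varepsilon\lesssim d$ and $e^{-2\varepsilon/(d+2)}=\Theta(1)$, so the trivial bound $\tau^2+(\tau+r)^2\lesssim r^2$ suffices.
\item Otherwise $\tau=2re^{-\varepsilon/(d+2)}$, so $\tau^2\lesssim r^2e^{-2\varepsilon/(d+2)}$. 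Moreover
\[
\eta\le e^{-\varepsilon}\left(\frac{\tau+r}{\tau}\right)^d\le e^{-\varepsilon}\Bigl(\tfrac12e^{\varepsilon/(d+2)}+1\Bigr)^d\le e^{-\varepsilon}e^{d\varepsilon/(d+2)}=e^{-2\varepsilon/(d+2)},
\]
using $\tfrac12x+1\le x$ for $x\ge2$. Since $\tau\le r$, this gives $\eta(\tau+r)^2\lesssim r^2e^{-2\varepsilon/(d+2)}$.
\end{itemize}
Multiplying by $\beta/2$ and writing $\beta L^2/\mu^2=\kappa L^2/\mu$ yields the main SCO bound.

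\textbf{Mean estimation.} Here $\beta=\mu=L$ up to constants (take $L=2$) and $\kappa=1$. Also $\Delta F_P(w)=\frac12\|w-w_P^*\|^2$ exactly, and $\tilde w$ can be taken to be the exact empirical mean with $\rho$ in $r$ kept as is. The same computation therefore gives $O(1/n+(m/n)^2e^{-2\varepsilon/(d+2)})$.

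\textbf{Main obstacle.} I expect the delicate step to be checking that $\eta$ is genuinely $\lesssim e^{-2\varepsilon/(d+2)}$ uniformly across regimes, including the boundary where $\tau=r$ switches over. A secondary point is making sure the dry-run distribution used for the privacy comparison is evaluated at the same radii $r$ and $\tau$. These must use the fixed $n,m$ rather than $|T|$, which the pseudocode does.
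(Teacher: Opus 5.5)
Your privacy argument, runtime accounting, and bounds on $\tau$ and $\eta$ match the paper's. Your inequality $\eta\le e^{-\varepsilon}((\tau+r)/\tau)^d$ even holds with no constant, since it only uses $(\tau/(\tau+r))^d\le 1$.

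\textbf{The gap.} One step in the utility part is wrong. You place $\eta\,\E\|\tilde w_S-w_P^*\|^2$ in the ``sampling error'' bucket and bound it by $L^2/(\mu^2 n)$, appealing to $|S|\gtrsim n$. In \Cref{def:unlearning_utility}, however, the request $U(Z)$ is adversarial and may depend on $Z$. So $S=Z\setminus U(Z)$ is not an i.i.d.\ sample, and the standard ERM rate does not apply to $\hat w_S$. The claimed bound is in fact false. Take mean estimation with $P$ uniform on $\{\pm e_1\}$ (so $w_P^*=0$), and let $U(Z)$ delete $m$ copies of $-e_1$ (say $m\le n/4$). The retained mean is then about $\frac{m}{n-m}e_1$ plus $O(1/\sqrt n)$ fluctuations. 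Hence $\E\|\hat w_S-w_P^*\|^2\gtrsim (m/n)^2$, which is the full retraining penalty and dominates $1/n$ once $m\gg\sqrt n$.

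\textbf{The fix.} The repair is one line, and it is how the paper routes the argument: pass through the i.i.d.\ full-data solution,
\[
\|\tilde w_S-w_P^*\|\le\|\tilde w_S-\tilde w_T\|+\|\tilde w_T-w_P^*\|\le r+\|\tilde w_T-w_P^*\|.
\]
Your containment bound $\|\tilde w_T-\tilde w_S\|\le r$ from the privacy step still holds for $|T|=N\in[n-m,n]$, because $|U|\le m$ and $|S|\ge n-m$ give $\frac{2L|U|}{\mu|S|}\le\frac{2Lm}{\mu(n-m)}$. This yields $\eta\,\E\|\tilde w_S-w_P^*\|^2\lesssim \eta r^2+\E\|\tilde w_T-w_P^*\|^2$. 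The term $\eta r^2$ is then absorbed by your bound $\eta\lesssim e^{-2\varepsilon/(d+2)}$, exactly as you treat $\eta(\tau+r)^2$.

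The paper packages this as \Cref{prop:core-swap-erm-distance}, which measures distance to $\tilde w_Z$; there the wide ball contributes $\eta\bigl((\tau+r)^2+r^2\bigr)$. It invokes the i.i.d.\ ERM bound (\Cref{lem:erm-distance}) only for $\hat w_Z$. With that correction your proof goes through and gives the stated rates, including the mean-estimation specialization.
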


The key proof idea is that the empty-deletion distribution for \(S=Z\setminus U\) places small probability mass near every possible full-data ERM \(\tilde w_Z\). The unlearning step moves the high-probability mass from a region near \(\tilde w_S\) onto a region near \(\tilde w_Z\). Verifying the unlearning guarantee thus involves bounding just one likelihood ratio, while utility follows from carefully controlling the slack radius $\tau$, the probability $\eta$ of selecting a noisy solution, as well as the ERM approximation error. Our utility analysis separates the statistical error of \(\tilde w_Z\) from the randomization cost of unlearning.

To formally prove \Cref{thm:core-swap-upper}, we proceed in three steps. First, \Cref{prop:core-swap-unlearning} shows
that \Cref{alg:core-swap} satisfies \(\varepsilon\)-unlearning. Second, \Cref{prop:core-swap-erm-distance} shows that its output is close to the approximate full-data ERM \(\tilde w_Z\). Finally, we combine this ERM-distance guarantee with the standard stability-induced distance bound between
\(\hat w_Z\) and the population minimizer \(w_P^*\) to obtain our excess population risk guarantee.

\begin{proposition}
    \label{prop:core-swap-unlearning}
    The unlearner \Cref{alg:core-swap}, satisfies
    \(\varepsilon\)-unlearning.
\end{proposition}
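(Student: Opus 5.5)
Fix a dataset $Z$ and a deletion set $U\subseteq Z$ with $|U|\le m$, and put $S=Z\setminus U$. In the true call the algorithm is run with $T=Z$; in the dry call it is run with $T=S$ and $U=\varnothing$. In both calls the reduced set is the same $S$, so the wide component $\Unif((\tau+r)\bB_d+\tilde w_S)$ and its weight $\eta$ coincide (the parameters $L,\mu,n,m,d,\varepsilon$ are shared, and $\tilde w_S$ is deterministic in $S$). The calls differ only in the core component: the true call samples $\Unif(\tau\bB_d+\tilde w_Z)$ and the dry call samples $\Unif(\tau\bB_d+\tilde w_S)$. The final projection onto $\WW$ is a fixed measurable map applied to both, so by post-processing it suffices to prove $\varepsilon$-closeness of the pre-projection distributions. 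Let $p_{\mathrm{true}}$ and $p_{\mathrm{dry}}$ denote their Lebesgue densities. I will bound $p_{\mathrm{true}}/p_{\mathrm{dry}}$ and its inverse pointwise by $e^\varepsilon$; integrating over any event then gives the claim.

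The first step is a containment: $\tau\bB_d+\tilde w_Z\subseteq(\tau+r)\bB_d+\tilde w_S$, which needs $\|\tilde w_Z-\tilde w_S\|\le r$. By the standard stability of strongly convex ERM, removing $k\le m$ points moves the minimizer by at most $2Lk/(\mu n)\le 2Lm/(\mu(n-m))$. The derivation compares the first-order optimality conditions of $\widehat F_Z$ and $\widehat F_S$, using $\mu$-strong convexity and $L$-Lipschitz gradients of the deleted terms; it holds for constrained minimizers through the variational inequality. Adding the two $\rho$ approximation errors gives $\|\tilde w_Z-\tilde w_S\|\le 2Lm/(\mu(n-m))+2\rho=r$. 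Getting the constant in this stability bound to match the definition of $r$ is the step I expect to need the most care.

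Let $V=\Vol(\bB_d)$, $a=\eta/(V(\tau+r)^d)$ (the wide density), and $b=(1-\eta)/(V\tau^d)$ (the core density). Each of $p_{\mathrm{true}}$ and $p_{\mathrm{dry}}$ equals $a+b$ on its own core ball, $a$ on the rest of the wide ball, and $0$ outside it. Because both supports equal the wide ball, only the ratios $1$ and $(a+b)/a$ (or its inverse) can occur. It therefore suffices to show $(a+b)/a\le e^\varepsilon$, i.e. $b/a\le e^\varepsilon-1$. This reads
\[
\frac{1-\eta}{\eta}\Big(\frac{\tau+r}{\tau}\Big)^d\le e^\varepsilon-1,
\]
and the definition of $\eta$ gives $(1-\eta)/\eta=(e^\varepsilon-1)\big(\tau/(\tau+r)\big)^d$, so the inequality holds with equality. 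The case $\tau=r$ (when $\min=1$) needs no separate treatment. This proves $\talg(U,Z)\approx_{\varepsilon,0}\talg(\varnothing,Z\setminus U)$.
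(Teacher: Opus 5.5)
Your proof is correct and follows the paper's argument step for step. You drop the projection by post-processing, use ERM stability plus the two $\rho$-errors to get $\|\tilde w_Z-\tilde w_{Z\setminus U}\|\le r$ (so the true core ball sits inside the shared wide ball), and observe that the density ratio is $1$ except on the two core balls, where it is exactly $e^{\pm\varepsilon}$ by the choice of $\eta$. The paper states the stability bound as $2L|U|/(\mu|Z\setminus U|)$ so that it also covers $|Z|=N\in[n-m,n]$, which the utility analysis reuses. Your $|Z|=n$ version suffices for the unlearning definition; note that the bound comes from the loss being $L$-Lipschitz (bounded gradients), not from $L$-Lipschitz gradients.
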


\begin{proof}
    Unlearning is preserved by postprocessing, so we may exclude the final projection onto $\WW$ from our analysis.
    Now, fix $U \subseteq Z \in \ZZ^N$ with $n - m \leq N \leq n$ and $|Z \setminus U| \geq n - m$. The distribution of $\talg(\varnothing, Z \setminus U)$ before projection is
    \[
        W_{\varnothing, Z \setminus U} = (1 - \eta) \Unif(\underbrace{\tau \bB_d + \tilde{w}_{Z \setminus U}}_{=: G}) + \eta \Unif(\underbrace{(\tau + r) \bB_d + \tilde{w}_{Z \setminus U}}_{=: B})
    \]
    whereas the distribution of running $\talg(U, Z)$ before projection is 
    \[
        W_{U, Z} = (1 - \eta) \Unif(\underbrace{\tau \bB_d + \tilde{w}_Z}_{=: G'}) + \eta \Unif(\underbrace{(\tau + r) \bB_d + \tilde{w}_{Z \setminus U}}_{= B}).
    \]

    We just need to argue that the likelihood ratio between these distributions lies in $[e^{-\eps}, e^\eps]$. Indeed, by the standard stability bound for strongly convex ERM (c.f. \cite[Lemma 6]{sekhari2021remember}),
    \[
        \lVert \tilde w_Z - \tilde w_{Z \setminus U} \rVert
            \leq \lVert \hat w_Z - \hat w_{Z \setminus U} \rVert + 2\rho
            \leq \frac{2L|U|}{\mu |Z \setminus U|} + 2\rho
            \leq \frac{2Lm}{\mu (n - m)} + 2\rho
            = r.
    \]
    In particular, $G'$ is contained in $B$ and hence the likelihood ratio between these distributions differs only for $w$ contained in either $G$ or $G'$, exclusively. In the first case, the ratio of probability density functions is 
    \begin{align*}
        \frac{dW_{\varnothing, Z \setminus U}}{dW_{U, Z}}(w)
            = \frac{(1 - \eta)/\Vol(G) + \eta/\Vol(B)}{\eta/\Vol(B)}
            = 1 + \left( \frac{1}{\eta} - 1 \right)\frac{\Vol(B)}{\Vol(G)}
            = 1 + \left( \frac{1}{\eta} - 1 \right)\left( \frac{\tau + r}{\tau} \right)^d
            = e^\eps
    \end{align*}
    by choice of $\eta$ and, analogously, we have $\frac{dW_{\varnothing, Z \setminus U}}{dW_{U, Z}}(w) = e^{-\eps}$ for $w \in G' \setminus G$. In particular, $\tilde W_{\varnothing, Z \setminus U}$ and $\tilde W_{U, Z}$ are $(\eps, 0)$-indistinguishable, as desired.
\end{proof}

\begin{proposition}[Distance to the Full-Data ERM]
    \label{prop:core-swap-erm-distance}
    For every $n - m \leq N \leq n$, dataset \(Z \in \cZ^N\), and deletion request
    \(U \subseteq Z\) with \(|Z \setminus U| \geq n - m\), the output $\talg(U, Z)$ of \Cref{alg:core-swap} satisfies
    \[
        \E_{\talg}
        \left[
            \left\|
            \talg(U, Z)-\tilde w_Z
            \right\|^2
        \right]
        =
        O\left(
            \frac{L^2}{\mu^2}
            \left(\frac mn\right)^2
            e^{-2\eps/(d+2))} +
            \frac{L^2}{\mu^2 n}
        \right).
    \]
\end{proposition}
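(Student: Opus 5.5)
Since Euclidean projection onto the closed convex set $\WW$ is nonexpansive and $\tilde w_Z$ lies within $\rho$ of $\hat w_Z\in\WW$, I will first reduce to the pre-projection sample $w$. Writing $\Pi$ for the projection, $\|\Pi(w)-\tilde w_Z\|\le \|\Pi(w)-\Pi(\tilde w_Z)\|+\|\Pi(\tilde w_Z)-\tilde w_Z\|\le \|w-\tilde w_Z\|+\rho$. Hence $\E\|\talg(U,Z)-\tilde w_Z\|^2\le 2\E\|w-\tilde w_Z\|^2+2\rho^2$, and $\rho^2=L^2/(\mu^2 n)$ is already of the target order. I will then split according to the mixture component:
\[
\E\|w-\tilde w_Z\|^2 \le (1-\eta)\,\tau^2 + \eta\,\bigl(\|\tilde w_{Z\setminus U}-\tilde w_Z\|+\tau+r\bigr)^2 \le \tau^2 + 9\eta\,(\tau+r)^2 ,
\]
where the second component is bounded using the stability estimate from the proof of \Cref{prop:core-swap-unlearning}, namely $\|\tilde w_Z-\tilde w_{Z\setminus U}\|\le r$, together with $\tau\le r$.

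Next, I will bound $r^2$. Because $n-m=\Omega(n)$, we have $r=O\bigl(\tfrac{L}{\mu}(\tfrac mn+\tfrac1{\sqrt n})\bigr)$, so $r^2=O\bigl(\tfrac{L^2}{\mu^2}(\tfrac{m^2}{n^2}+\tfrac1n)\bigr)$. The first term is then immediate: $\tau^2\le 4r^2e^{-2\eps/(d+2)}$, which lies within the target.

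The main obstacle is the second term, where I need $\eta(\tau+r)^2\le 4\eta r^2=O(r^2e^{-2\eps/(d+2)})$, i.e. $\eta=O(e^{-2\eps/(d+2)})$. I will split into two cases.

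\textbf{Case $2e^{-\eps/(d+2)}\ge 1$.} Here $e^{-2\eps/(d+2)}\ge 1/4$ and $\eta\le 1$, so the bound holds trivially.

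\textbf{Case $2e^{-\eps/(d+2)}<1$.} Here $\tau=2re^{-\eps/(d+2)}$, so $\tfrac{\tau}{\tau+r}\ge \tfrac{\tau}{2r}=e^{-\eps/(d+2)}$. Therefore $(e^\eps-1)\bigl(\tfrac{\tau}{\tau+r}\bigr)^d\ge (e^\eps-1)e^{-d\eps/(d+2)}$. In this case $e^\eps\ge 2^{d+2}\ge 2$, so $e^\eps-1\ge e^\eps/2$, and hence this quantity is at least $\tfrac12 e^{2\eps/(d+2)}$. It follows that $\eta\le 2e^{-2\eps/(d+2)}$.

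This exponent bookkeeping is exactly where the choice of the $(d+2)$ normalization in $\tau$ pays off: it balances $\tau^2$ against $\eta r^2$. Combining the two cases gives $\E\|w-\tilde w_Z\|^2=O\bigl(\tfrac{L^2}{\mu^2}\bigl(\tfrac{m^2}{n^2}+\tfrac1n\bigr)e^{-2\eps/(d+2)}\bigr)$.

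Finally, I will bound the $1/n$ part of this expression crudely by $L^2/(\mu^2 n)$, using $e^{-2\eps/(d+2)}\le 1$. Adding the $2\rho^2$ term from the projection step then yields the claimed bound.
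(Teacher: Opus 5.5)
Your proof is correct and follows the paper's argument. You reduce to the pre-projection mixture, bound the core component by $\tau^2$ and the wide component by $O(r^2)$ via the stability bound $\|\tilde w_Z-\tilde w_{Z\setminus U}\|\le r$, and show $\eta=O(e^{-2\eps/(d+2)})$ by the same case split on $2e^{-\eps/(d+2)}$. The one difference is that you handle the projection with an extra $\rho$ slack via nonexpansiveness, whereas the paper simply asserts $\tilde w_Z\in\WW$; yours is slightly more careful, since the algorithm only guarantees $\|\tilde w_Z-\hat w_Z\|\le\rho$.
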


\begin{proof}
    Recall as in the proof of \Cref{prop:core-swap-unlearning} that $\talg(U, Z)$ is exactly the projection of
    \[
        w \sim W = (1 - \eta) \Unif(\tau \bB_d + \tilde{w}_Z) + \eta \Unif((\tau + r) \bB_d + \tilde{w}_{Z \setminus U})
    \]
    onto the convex domain $\WW$. But $\tilde{w}_Z \in \WW$, so  
    \begin{align*}
        \E_{\talg}[\| \talg(U, Z) - \tilde w_Z \|^2]
            \leq \E_{w \sim W}[\| w - \tilde w_Z \|^2]
            \leq (1 - \eta) \tau^2 + \eta \max_{w \in (\tau + r) \bB_d + \tilde{w}_{Z \setminus U}} \| w - \tilde w_Z \|^2.
    \end{align*}

    Recalling that $n - m = \Omega(n)$, the first term is bounded by
    \[
        \tau^2
            = \left(r \min\left\{1, 2e^{-\eps/(d+2)}\right\}\right)^2
            \lesssim \frac{L^2}{\mu^2} \left( \left( \frac{m}{n - m} \right)^2 + \frac{1}{n} \right) e^{-2\eps/(d+2)}
            \lesssim \frac{L^2}{\mu^2} \left( \frac{m}{n} \right)^2 e^{-2\eps/(d+2)} + \frac{L^2}{\mu^2 n}.
    \]

    As for the second term, recalling that $\|\tilde w_{Z \setminus U} - \tilde w_Z\| \leq r$ as well as our choice of $\tau \leq r$, we have
    \[
        \|w - \tilde w_Z\|^2
            \lesssim \|w - \tilde w_{Z \setminus U}\|^2 + \|\tilde w_{Z \setminus U} - \tilde w_Z\|^2
            \leq (\tau + r)^2 + 2r^2
            \lesssim r^2
            \lesssim \frac{L^2}{\mu^2} \left( \frac{m}{n} \right)^2 + \frac{L^2}{\mu^2 n}
    \]
    for any \(w\in(\tau + r) \bB_d + \tilde{w}_{Z \setminus U}\). Moreover, if $2e^{-\eps/(d+2)} > 1$, then $\eta \leq 1 = 1^2 \lesssim e^{-2\eps/(d+2)}$. More importantly, if $2e^{-\eps/(d+2)} \leq 1$, then clearly $1 \leq \frac{1}{2}e^\eps$ and $\tau = 2re^{-\eps/(d+2)}$ by choice of $\tau$, so it follows that
    \[
        \eta
            = \frac{1}{(e^\eps - 1) \left( \frac{\tau}{\tau + r} \right)^d + 1}
            \lesssim e^{-\eps} \left( \frac{2r}{\tau} \right)^d
            = e^{-\eps} \left( e^{\eps/(d+2)} \right)^d
            = e^{-\eps\left(1 - \frac{d}{d+2} \right)}
            = e^{-2\eps/(d+2)}
    \]
    in this case as well. The result now follows by combining these bounds.
\end{proof}

\begin{lemma}[ERM Distance to the Population Minimizer]
    \label{lem:erm-distance}
    For any $N \geq n - m$ and distribution \(P\) over \(\cZ\), we have
    \[
        \E_{Z\sim P^N}\left[ \|\hat w_Z-w_P^*\|^2 \right]
        =
        O\left(\frac{L^2}{\mu^2 n}\right),
    \]
    where \(w_P^* = \argmin_{w\in\WW}F_P(w)\).
\end{lemma}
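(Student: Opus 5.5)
We prove the bound by comparing the first-order optimality conditions of $\hat w_Z$ and $w_P^*$ and using strong convexity of the empirical objective. Write $Z=(z_1,\dots,z_N)$ with $z_i\sim P$ i.i.d. Since $\widehat F_Z$ is $\mu$-strongly convex on $\WW$, we compare it at $\hat w_Z$ and at $w_P^*$. The first-order optimality condition for $\hat w_Z$ gives $\langle \nabla \widehat F_Z(\hat w_Z), w_P^*-\hat w_Z\rangle \ge 0$. Strong monotonicity of the gradient then yields
\[
\mu\|\hat w_Z-w_P^*\|^2 \le \langle \nabla\widehat F_Z(w_P^*)-\nabla \widehat F_Z(\hat w_Z), w_P^*-\hat w_Z\rangle \le \langle \nabla \widehat F_Z(w_P^*), w_P^*-\hat w_Z\rangle .
\]
Applying Cauchy--Schwarz gives $\|\hat w_Z-w_P^*\|\le \frac1\mu\|\nabla\widehat F_Z(w_P^*)\|$. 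This deterministic inequality reduces the problem to bounding a gradient at a fixed point.

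Next we use item 5 of \Cref{ass:smooth-sc-sco}, namely $\nabla F_P(w_P^*)=0$. Differentiation under the expectation is justified because each $f(\cdot,z)$ is $L$-Lipschitz, so gradients are bounded by $L$ and dominated convergence applies. Hence $\E_z[\nabla f(w_P^*,z)]=0$. Since $w_P^*$ does not depend on $Z$, the vector $\nabla\widehat F_Z(w_P^*)=\frac1N\sum_i \nabla f(w_P^*,z_i)$ is an average of $N$ i.i.d. mean-zero vectors, each of norm at most $L$. The cross terms vanish by independence, so
\[
\E\|\nabla\widehat F_Z(w_P^*)\|^2=\frac1N\E\|\nabla f(w_P^*,z)\|^2\le \frac{L^2}{N}.
\]

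Combining the two steps gives $\E\|\hat w_Z-w_P^*\|^2\le L^2/(\mu^2 N)$. Finally, $N\ge n-m=\Omega(n)$ by the standing assumption, so this is $O(L^2/(\mu^2 n))$.

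The only delicate step is the role of the constraint set. Without item 5, $\nabla F_P(w_P^*)$ could be a nonzero normal-cone vector at a boundary minimizer. Then $\nabla\widehat F_Z(w_P^*)$ would not be mean-zero, and the bound would pick up a non-vanishing bias term. Item 5 is exactly what removes this bias. The variational-inequality form of the optimality condition for $\hat w_Z$ handles the case where $\hat w_Z$ itself lies on the boundary of $\WW$.
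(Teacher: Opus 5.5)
Your proof is correct, but it takes a different route from the paper's. The paper cites the stability-based bound $\E[F_P(\hat w_Z)-F_P^*]=O(L^2/(\mu N))$ for Lipschitz strongly convex ERM \cite{shalev2009stochastic}. It then converts this to a distance bound via quadratic growth, $F_P(w)-F_P^*\ge\frac{\mu}{2}\|w-w_P^*\|^2$, which holds at the constrained minimizer. That route uses neither differentiability nor stationarity, so it survives nonsmooth losses and boundary minimizers. It is, however, a black box and gives a worse constant.

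Your argument uses the variational inequality for $\hat w_Z$ together with strong monotonicity to get $\|\hat w_Z-w_P^*\|\le\|\nabla\widehat F_Z(w_P^*)\|/\mu$, and then bounds the variance of an i.i.d.\ average. It is self-contained and yields the explicit bound $L^2/(\mu^2 N)$. It also shows that the bound depends only on the gradient noise at the optimum, $\E\|\nabla f(w_P^*,z)\|^2$, rather than on the worst-case $L^2$.

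One correction to your closing remark: item~5 is not what removes the bias. Add the population optimality condition $\langle\nabla F_P(w_P^*),\hat w_Z-w_P^*\rangle\ge 0$ to your chain of inequalities. This lets you replace $\nabla\widehat F_Z(w_P^*)$ by the centered vector $\nabla\widehat F_Z(w_P^*)-\nabla F_P(w_P^*)$, whose second moment is still at most $L^2/N$. So your method also proves the lemma without stationarity. The paper needs stationarity only elsewhere, for the upper bound $F_P(w)-F_P^*\le\frac{\beta}{2}\|w-w_P^*\|^2$.
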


\begin{proof}
    This follows from the standard expected excess-risk bound for ERM with
    \(L\)-Lipschitz, \(\mu\)-strongly convex losses,
    \[
        \E_{Z\sim P^N} \left[ F_P(\hat w_Z)-F_P^* \right]
            = O\left(\frac{L^2}{\mu N}\right)
            = O\left(\frac{L^2}{\mu n}\right),
    \]
    see e.g.~\cite{shalev2009stochastic}, together with $\mu$-strong convexity of \(F_P\) and $N = \Omega(n)$.
\end{proof}

\begin{lemma}[ERM Approximation]
    \label{lem:erm-approximation}
    For any error tolerance $\rho > 0$, there exists an algorithm that takes a dataset $Z \in \cZ^N$, makes $O(\kappa \cdot N \cdot \log(D/\rho))$ gradient and projection queries, and deterministically computes $\tilde w_Z$ such that
    $\lVert \tilde w_Z - \hat w_Z \rVert \leq \rho$.
\end{lemma}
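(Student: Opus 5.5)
The plan is to run projected gradient descent on the empirical objective $\widehat F_Z$ from a fixed deterministic starting point, and to bound its distance to $\hat w_Z$ using the standard linear convergence of projected gradient descent for smooth strongly convex functions. Because $\widehat F_Z$ is an average of $\mu$-strongly convex, $\beta$-smooth losses, it is itself $\mu$-strongly convex and $\beta$-smooth on $\WW$. I would initialize at a fixed $w_0\in\WW$, for example the projection of the origin onto $\WW$; this is computable deterministically and uses no data. I would then iterate
\[
    w_{t+1} = \Pi_\WW\bigl(w_t - \tfrac1\beta \nabla \widehat F_Z(w_t)\bigr).
\]
Each iteration evaluates $\nabla \widehat F_Z(w_t) = \frac1N\sum_i \nabla f(w_t,z_i)$. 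That costs $N$ gradient queries and one projection query. There is no randomness anywhere, so the output $\tilde w_Z := w_T$ depends deterministically on $Z$, as the statement requires.

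The key step is the contraction estimate
\[
    \|w_{t+1}-\hat w_Z\|^2 \le \Bigl(1-\tfrac{\mu}{\beta}\Bigr)\|w_t-\hat w_Z\|^2 .
\]
This holds because $\hat w_Z$ is a fixed point of the projected gradient map. I would prove it in one of two ways: via the nonexpansiveness of $\Pi_\WW$ together with co-coercivity of the gradient, or directly via the standard descent lemma for projected gradient mapping, citing e.g. Nesterov or Bubeck's monograph. I expect this to be the only delicate point, since $\hat w_Z$ may lie on the boundary of $\WW$, where $\nabla\widehat F_Z(\hat w_Z)\neq 0$. The fixed-point property $\hat w_Z = \Pi_\WW(\hat w_Z - \frac1\beta\nabla\widehat F_Z(\hat w_Z))$ follows from the first-order optimality condition, and it handles this case. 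If there were any difficulty with the $(1-\mu/\beta)$ factor, I would fall back on the version stated for function values,
\[
    \widehat F_Z(w_t)-\widehat F_Z(\hat w_Z)\le (1-\mu/\beta)^t\bigl(\widehat F_Z(w_0)-\widehat F_Z(\hat w_Z)\bigr),
\]
and convert it to distance using strong convexity. That route only costs constant factors and a factor of $\kappa$ inside the logarithm.

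Unrolling the contraction gives
\[
    \|w_T-\hat w_Z\|\le (1-1/\kappa)^{T/2}\|w_0-\hat w_Z\|\le e^{-T/(2\kappa)}D .
\]
The last inequality uses $w_0,\hat w_Z\in\WW$ and the fact that $\WW$ has diameter $D$. Taking $T=\lceil 2\kappa\log(D/\rho)\rceil$ makes this at most $\rho$; in the trivial case $\rho\ge D$, $T=0$ already works. The total cost is $T\cdot N = O(\kappa N\log(D/\rho))$ gradient queries and $T$ projection queries, which gives the claim.
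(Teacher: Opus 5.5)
Your proposal is correct and is the paper's argument: the paper simply invokes projected gradient descent with step size $1/\beta$ and the linear-convergence guarantee of Theorem 3.10 in Bubeck's monograph, which is exactly your contraction $\|w_{t+1}-\hat w_Z\|^2\le(1-\mu/\beta)\|w_t-\hat w_Z\|^2$ toward the possibly-boundary constrained minimizer. Your unrolling with $T=\lceil 2\kappa\log(D/\rho)\rceil$ iterations, each costing $N$ gradient queries and one projection, spells out the query count and the determinism of $\tilde w_Z$, which the paper leaves implicit.
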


An algorithm achieving the claimed gradient query complexity\footnote{In fact, accelerated projected gradient descent improves the \(\kappa\) dependence to \(\sqrt{\kappa}\) (see e.g., \cite[Theorem 10.42]{beck2017first}). We do not pursue such improvements here as runtime and condition number dependence are not our focus.} and error rate is projected gradient descent, which is analyzed in Theorem 3.10 of \cite{bubeck2015convex}.

We now combine the above results to prove our main upper bound theorem.

\begin{proof}[Proof of \Cref{thm:core-swap-upper}]
    \textbf{Unlearning.} The unlearning claim is exactly \Cref{prop:core-swap-unlearning}.

    \paragraph{Runtime.} Uniform sampling from $d$-dimensional balls of the form $a \bB_d + b$ can be implemented in $O(d)$ time by noticing that
    \[
        G \sim N(0, 1)^d, T \sim \Unif([0, 1]) \implies T^{1/d} \cdot \frac{G}{\|G\|} \sim \Unif(\bB_d).
    \]
    Furthermore, by choice of $\rho = \frac{L}{\mu \sqrt{n}}$ \Cref{lem:erm-approximation} ensures we may compute $\tilde{w}_T$ and $\tilde{w}_S$ with the desired precision using $O(\kappa \cdot n \cdot \log(D/\rho)) = \tilde O(\kappa \cdot n)$ gradient and projection queries.

    \paragraph{Excess risk.} Fix any unlearning request strategy $U(Z) \subseteq Z$ with $|Z \setminus U(Z)| \geq n - m$ as well as $n - m \leq N \leq n$ and consider smooth strongly convex SCO. By smoothness of
    \(F_P\) and the assumption \(\nabla F_P(w_P^*)=0\),
    \[
        F_P(w)-F_P^*
        \le
        \frac{\beta}{2}\|w-w_P^*\|^2.
    \]
    Applying \(\|a+b\|^2\le 2\|a\|^2+2\|b\|^2\) with
    \(w=\talg(U(Z),Z)\), we get
    \[
        F_P(\talg(U(Z),Z))-F_P^*
        \leq
        \beta\|\talg(U(Z),Z)-\tilde w_Z\|^2 +
        \beta\|\hat w_Z-w_P^*\|^2 +
        \beta\|\tilde w_Z - \hat w_Z\|^2.
    \]
    In particular, \Cref{prop:core-swap-erm-distance} and \Cref{lem:erm-distance} together yield
    \[
        \E_{\talg, Z\sim P^N}
        \left[
            F_P(\talg(U(Z),Z))-F_P^*
        \right]
        =
        O\left(
        \kappa\frac{L^2}{\mu}
        \left(
            \frac1n+
            \left(\frac mn\right)^2e^{-2\eps/(d+2)}
        \right)
        \right),
    \]
    where we used \(\beta/\mu=\kappa\). This is the claimed SCO bound.

    Similarly, for mean estimation over \(\bB_d\), \(w_P^*=\mu_P:=\E_P[z]\) and
    $F_P(w)-F_P^* = \frac12\|w-\mu_P\|^2$.
    and therefore
    \[
        F_P(\talg(U(Z),Z))-F_P^*
        \lesssim
        \|\talg(U(Z),Z)-\tilde w_Z\|^2 +
        \|\hat w_Z-\mu_P\|^2 +
        \| \tilde w_Z - \hat w_Z \|^2
    \]
    Moreover, \(\hat w_Z=\frac1N\sum_{i=1}^N z_i\), so
    $\E[\|\hat w_Z-\mu_P\|^2] \le \frac1N \lesssim \frac1n$ and, 
    combining with \Cref{prop:core-swap-erm-distance}, we get the desired excess unlearning risk
    \[
        O\left(
            \frac1n+
            \left(\frac mn\right)^2e^{-2\eps/(d+2)}
        \right).
    \]
\end{proof}

\section{Lower Bounds for $(\eps, \delta)$-Unlearning}
\label{sec:lb}

In this section we prove a lower bound nearly matching the rate of our ERM unlearner. Surprisingly, we show that the optimal statistical rate cannot be improved by taking $\delta > 0$. This is in sharp contrast to differential privacy.
We begin with a mean estimation lower bound.

\begin{theorem}
    \label{thm:lb}
    Let $\talg(U, Z)$ be an $(\eps, \delta)$-unlearning algorithm with $\delta \leq 1/5$ and suppose that for all distributions $P$ on $\bB_d$ with mean $\mu_P$, all unlearning requests $U(Z) \subseteq Z$ with $|Z \setminus U(Z)| \geq n - m$, and any $n - m \leq N \leq n$, we have the mean squared error guarantee
    \begin{align*}
        \E_{\talg, Z \sim P^N}\left[ \lVert \talg(U(Z), Z) - \mu_P \rVert^2 \right] \leq \alpha.
    \end{align*}
    Then, it must be the case that
    \begin{align*}
        \alpha = \Omega\left(\frac{1}{n} + \left(\frac{m}{n} \right)^2 e^{-2\eps/(d+2)} \right).
    \end{align*}
\end{theorem}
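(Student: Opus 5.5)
The $1/n$ term is the classical non-private lower bound for mean estimation on $\bB_d$. Take the two-point distributions $P_\pm$ on $\{\pm e_1\}$ with means $\pm c e_1$, $c \asymp 1/\sqrt n$, and an empty unlearning request with $N=n$. Then $\mathrm{KL}(P_+^n\|P_-^n)=O(1)$, and Le Cam's two-point method gives $\alpha=\Omega(1/n)$. No unlearning constraint is used in this step. It therefore remains to show $\alpha=\Omega((m/n)^2e^{-2\eps/(d+2)})$, and we may assume this quantity exceeds $C/n$ for a large constant $C$, since otherwise the first term already dominates.

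For the penalty term we use the deletion-collapse packing sketched in the introduction. Fix a scale $\lambda\in(0,1]$ and a packing $\{v_1,\dots,v_K\}\subset\bB_d$ with pairwise distances at least $\lambda/2$ and $K\ge (c_0/\lambda)^{d}$ (a standard volumetric packing; one can scale the vectors so that the means lie in a ball of radius $\asymp\lambda m/n$). For each $j$ let $P_j$ be the distribution that puts mass $q=m/(2n)$ on the point $v_j$ and mass $1-q$ on the origin $0$. Its mean is $\mu_j=qv_j$, so the means are pairwise $\gtrsim \lambda m/n$ apart.

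Draw $Z\sim P_j^n$ and let the adversarial request $U(Z)$ delete every copy of $v_j$ whenever there are at most $m$ of them. By Chernoff this happens with probability at least $1-e^{-\Omega(m)}$, and we can reduce to $m$ larger than a constant; for $m=O(1)$ the claim is dominated by $1/n$ anyway. After deletion the retained dataset is $0^{N'}$ for a random size $N'\in[n-m,n]$. Conditioning on $N'$ makes the reference output $Q_{N'}:=\talg(\varnothing,0^{N'})$ common to all $j$; to avoid even this dependence we can pad, or simply condition on the most likely value and pigeonhole. The $(\eps,\delta)$-unlearning constraint gives, for every event $E$,
\[
\Pr[\talg(U,Z)\in E]\le e^\eps\Pr[Q\in E]+\delta .
\]
Let $E_j$ be the ball of radius $\lambda m/(4n)$ around $\mu_j$; these balls are disjoint. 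By Markov, if $\alpha\le c(\lambda m/n)^2$ then $\Pr[\talg(U,Z)\in E_j]\ge 3/4$, up to the $e^{-\Omega(m)}$ failure term. Summing over $j$ gives
\[
K\bigl(3/4-\delta\bigr)\le e^\eps\sum_j\Pr[Q\in E_j]\le e^\eps .
\]
Since $\delta\le 1/5$, the factor $3/4-\delta$ is a positive constant, so $K\lesssim e^\eps$, i.e.\ $(c_0/\lambda)^d\lesssim e^\eps$. Taking $\lambda = c_0 e^{-\eps/d}\cdot C'$ produces a contradiction. Hence $\alpha\gtrsim (m/n)^2 e^{-2\eps/d}$, capped at $(m/n)^2$ when $\eps\lesssim d$. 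Note that $\delta$ enters only additively against a constant, which is why $\delta>0$ does not help.

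The main obstacle is the exponent: this argument gives $e^{-2\eps/d}$, whereas the theorem claims $e^{-2\eps/(d+2)}$. For $\eps\lesssim d$ the two are equivalent. For $\eps\gg d$ we need the sharper exponent, since $e^{-2\eps/d}$ is strictly smaller and gives a weaker bound. To get it I would replace the hard ball-event counting with a second-moment version. Integrate over a continuous prior, with $v$ uniform in $\lambda\bB_d$, and use the unlearning inequality on the events $\{\|\talg-\mu_v\|\le t\}$ for all $t$. This gives
\[
\Pr[\|\talg-\mu_v\|\le t]\le e^\eps\,\Pr_{v}[\|Q-\mu_v\|\le t]+\delta\lesssim e^\eps (t/\lambda')^d+\delta,
\]
where $\lambda'=\lambda m/n$. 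Integrating $2t\Pr[\|\talg-\mu_v\|>t]$ over $t$ then yields
\[
\alpha\gtrsim \lambda'^2 e^{-2\eps/d}\cdot\text{(const)}.
\]
The $d+2$ should emerge from optimizing over $\lambda$ jointly with the $\int t^{d+1}$ moment. This optimization mirrors the $\tau$-versus-$\eta$ tradeoff in \Cref{prop:core-swap-erm-distance}, which is exactly where $d+2$ arose, so I would calibrate the construction against that calculation. Finally, combining the two regimes gives $\alpha=\Omega\bigl(\max\{1/n,(m/n)^2e^{-2\eps/(d+2)}\}\bigr)$, which is the stated bound.
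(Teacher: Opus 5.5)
Your Le Cam step is fine. Your ball-counting step is also correct, but it only gives $\alpha\gtrsim (m/n)^2e^{-2\eps/d}$. It is essentially the paper's proof of the weaker post-unlearning bound (\Cref{thm:post_unlearning_lb}, via \Cref{lemma:packing_bound}). The proposed repair cannot reach the stated exponent $d+2$, so there is a genuine gap.

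\textbf{Why the repair fails.} In your continuous-prior version, you use only two facts. First, the reference output $Q=\talg(\varnothing,\mathbf{0}_{n-m})$ is a probability measure; that is all you need for $\Pr_v[\|Q-\mu_v\|\le t]\le (t/\lambda')^d$. Second, $\talg(U(Z),Z)$ is accurate for $Z\sim P_j^n$. These are exactly the hypotheses of the post-unlearning model. In that model \Cref{alg:core-diffusion} attains $O(1/n+(m/n)^2e^{-2\eps/d})$ for $d\ge2$ (\Cref{thm:core-diffusion-upper}). Since $e^{-2\eps/(d+2)}/e^{-2\eps/d}=e^{4\eps/(d(d+2))}$, no choice of $\lambda$ and no moment computation with $\int t^{d+1}$ can extract the $(d+2)$ bound from these ingredients. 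Your own display indeed returns $\lambda'^2e^{-2\eps/d}$. Asserting that $d+2$ should emerge from an optimization is not an argument.

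\textbf{The missing idea.} The reference run must itself be accurate, which \Cref{def:unlearning_utility} demands for every $N\in[n-m,n]$.

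\begin{enumerate}
\item Pad the deletion request to exactly $m$ points, so the retained set is exactly $\mathbf{0}_{n-m}$. This is a legitimate sample of size $N=n-m$ from the point mass at $0$ with an empty request, so $\E\|Q\|^2\le\alpha$.
\item Place the packing in the annulus $\bB_d\setminus\frac12\bB_d$ with separation $4\sqrt\alpha/q$. Let $E_j$ be the ball of radius $2\sqrt\alpha$ about $qv_j$. Markov then gives $\Pr[\talg(U,Z)\in E_j]\ge 3/4$.
\item If $\sqrt\alpha\le q/8$, every $w\in E_j$ has $\|w\|\ge q/4$. Hence $\alpha\ge\E\|Q\|^2\ge (q/4)^2\sum_j Q(E_j)$, i.e.\ $\sum_jQ(E_j)\le 16\alpha/q^2$, in place of your bound $\sum_jQ(E_j)\le 1$.
\item Let $p_{\mathrm{fail}}$ be the Chernoff failure probability. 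Combining $Q(E_j)\ge e^{-\eps}(3/4-\delta-p_{\mathrm{fail}})$ with $K\ge\frac12\bigl(q/(4\sqrt\alpha)\bigr)^d$ gives
\[
e^{\eps}\ge c_1\frac{q^2}{\alpha}\Bigl(\frac{q}{4\sqrt\alpha}\Bigr)^{d}\ge\Bigl(\frac{q}{c_2\sqrt\alpha}\Bigr)^{d+2}
\]
for absolute constants $c_1,c_2>0$.
\end{enumerate}

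This yields $\alpha\gtrsim(m/n)^2e^{-2\eps/(d+2)}$, and it is exactly \Cref{lemma:squared_packing_method}. The extra factor $q^2/\alpha$ mirrors \Cref{alg:core-swap}. There the exponent $d+2$ arises because the dry run must also concentrate near $\tilde w_{Z\setminus U}$, yet still cover every possible $\tilde w_Z$. Your analogy to \Cref{prop:core-swap-erm-distance} was apt, but you did not identify the hypothesis that plays this role in the lower bound.
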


The first term $\Omega(1/n)$ is the mean squared error lower bound even without unlearning requirements (see, e.g. \cite{duchi2021lecture}). The second term is what we will prove in this section. 
As a consequence of our mean estimation lower bound, we obtain our SCO lower bound:

\begin{corollary}
    Suppose the loss satisfies \Cref{ass:smooth-sc-sco} and that there is an $(\eps, \delta)$-unlearning algorithm $\talg$ with $\delta \leq 1/5$ and excess unlearning risk $\alpha$. Then we must have
    \begin{align*}
        \alpha \geq \Omega\left(\frac{L^2}{\mu}\left( \frac{1}{n} + \left( \frac{m}{n} \right)^2e^{-2\eps/(d+2)}\right)\right)
    \end{align*}
\end{corollary}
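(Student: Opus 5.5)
The plan is to reduce the corollary to \Cref{thm:lb} by embedding mean estimation into the smooth strongly convex SCO class with the right scaling. Consider the loss $f(w,z)=\frac{\mu}{2}\|w-z\|^2$ on $\WW=\ZZ=R\bB_d$ with radius $R$ to be chosen. This loss is $\mu$-strongly convex and $\mu$-smooth. Since $\mu\le\beta$, it is also $\beta$-smooth. On $\WW$ its gradient norm is at most $\mu\cdot 2R$, so choosing $R=L/(2\mu)$ makes it $L$-Lipschitz. The population minimizer is $\mu_P=\E_P[z]\in\WW$, which is interior to the domain or on it with zero gradient, so item 5 of \Cref{ass:smooth-sc-sco} holds. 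The excess risk is exactly $F_P(w)-F_P^*=\frac{\mu}{2}\|w-\mu_P\|^2$.

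Next I would rescale. Given an $(\eps,\delta)$-unlearner $\talg$ with excess unlearning risk $\alpha$ for this SCO instance, define a mean-estimation unlearner on $\bB_d$ by $\talg'(U,Z):=R^{-1}\talg(RU,RZ)$. Rescaling the data and the output is a bijective deterministic pre- and post-processing that commutes with deletion, since $RZ\setminus RU=R(Z\setminus U)$. Hence $\talg'$ is still $(\eps,\delta)$-unlearning, and the adversarial request strategies and the range $n-m\le N\le n$ carry over. For every distribution $P$ on $\bB_d$, its pushforward $P_R$ under $z\mapsto Rz$ is supported on $R\bB_d$. Therefore
\[
\E\|\talg'(U(Z),Z)-\mu_P\|^2=R^{-2}\E\|\talg(U(RZ),RZ)-\mu_{P_R}\|^2\le \frac{2\alpha}{\mu R^2}.
\]

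Applying \Cref{thm:lb} to $\talg'$ with $\alpha'=2\alpha/(\mu R^2)$ gives $\alpha'=\Omega\bigl(\frac1n+(\frac mn)^2e^{-2\eps/(d+2)}\bigr)$. Consequently $\alpha\gtrsim \mu R^2(\cdots)=\frac{L^2}{4\mu}(\cdots)$, which is the claimed bound.

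The only delicate points are bookkeeping. The corollary quantifies over losses satisfying the assumption, and it should be read as a minimax statement: there exists such a loss, which we exhibit. The constant $L$ must be compatible with $\mu$ and $D$; our construction sets $D=L/\mu$, which is the natural regime. I expect the main obstacle to be none beyond confirming that the scaling map preserves the unlearning definition and the utility quantifiers exactly, which is routine.
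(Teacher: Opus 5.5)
Your proposal is correct and follows the paper's route: the paper invokes \Cref{thm:lb} together with the standard quadratic-loss reduction from SCO to mean estimation, and the rescaled instance $f(w,z)=\frac{\mu}{2}\|w-z\|^2$ on $\frac{L}{2\mu}\bB_d$ that you write out is exactly that reduction (including your check that rescaling commutes with deletion and so preserves $(\eps,\delta)$-unlearning). The only difference is cosmetic: the paper cites the classical SCO lower bound of \cite{ny83} for the $L^2/(\mu n)$ term, while you obtain it from the $1/n$ term already contained in \Cref{thm:lb} through the same reduction, which is equally valid.
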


The first term $\Omega(L^2/(\mu n))$ holds for SCO without unlearning constraints~\cite{ny83}. The second follows from \Cref{thm:lb} and a standard reduction from SCO to mean estimation (see e.g., \cite[Proof of Theorem 8]{lowy2025private}).

\paragraph{Lower-bound intuition.}
We construct distributions whose means point in many separated directions, but whose nonzero signal samples can all be removed with high probability by a valid unlearning request. After this deletion, all hard instances induce the same retained dataset, so the unlearning guarantee forces their output distributions to be close to a common reference distribution. The packing size of the possible mean directions then limits how accurately all means can be recovered.

\textbf{Proof of \Cref{thm:lb}.} We now develop the tools that will be needed to prove~\Cref{thm:lb}. 
Just as our algorithm exploits covering geometry, our lower bound will exploit the packing geometry of the ball $\bB_d$.

\begin{lemma}
    \label{lemma:packing}
    For any $0 < \tau < 1$, there exists $\cV \subseteq \bB_d \setminus \frac{1}{2} \bB_d$ of size
    $|\cV| \geq \frac{1}{2}\left( \frac{1}{\tau} \right)^d$
    for which $\lVert v - v'\rVert > \tau$ for any $v \neq v' \in \cV$.
\end{lemma}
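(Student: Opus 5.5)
Take $\cV$ to be a maximal $\tau$-separated subset of the annulus $A := \bB_d \setminus \frac12 \bB_d$, meaning any two distinct points are at distance greater than $\tau$, and no point of $A$ can be added without violating this. Such a set exists and is finite. Every separated set in $A$ is finite, because the open balls of radius $\tau/2$ around its points are disjoint and all lie inside $(1+\tau/2)\bB_d$. So we can add points greedily until no more can be added.

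Maximality means that every $x \in A$ lies within distance $\tau$ of some $v \in \cV$. Hence the closed balls $\tau\bB_d + v$ for $v \in \cV$ cover $A$. Comparing volumes gives
\[
    |\cV|\,\tau^d\,\Vol(\bB_d) \;\ge\; \Vol(A) \;=\; \bigl(1 - 2^{-d}\bigr)\Vol(\bB_d) \;\ge\; \tfrac12 \Vol(\bB_d),
\]
using $d \ge 1$. Rearranging yields $|\cV| \ge \frac12 (1/\tau)^d$, which is exactly the claim.

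The only point needing care is the strict inequality $\|v - v'\| > \tau$. I handle it by defining maximality with respect to strict separation. Then a point $x \in A$ that cannot be added must satisfy $\|x - v\| \le \tau$ for some $v \in \cV$, and this is precisely membership in the closed ball used in the covering step. I do not expect any further obstacle: the constant $\frac12$ comes directly from the annulus having at least half the volume of the ball, and the hypothesis $\tau < 1$ is not needed for the argument beyond making the statement nontrivial.
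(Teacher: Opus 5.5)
Your proposal is correct and is essentially the paper's argument. The paper appeals to the standard volumetric packing bound $|\cV| \ge \Vol(\bB_d \setminus \frac12\bB_d)/\Vol(\tau\bB_d) = (1-2^{-d})\tau^{-d} \ge \frac12\tau^{-d}$, and that bound is exactly what your maximal-separated-set covering argument establishes. Your treatment of finiteness and of the strict inequality $\lVert v - v'\rVert > \tau$ supplies details the paper leaves to the citation; the strict separation is in fact needed later, for the disjointness of the closed balls $E_v$ in the proof of \Cref{thm:lb}.
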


Indeed, by a standard volumetric packing argument (see e.g. Section 4.2 of \cite{vershynin2018high}), we can find a $\tau$-separated subset $\cV \subseteq \bB_d \setminus \frac{1}{2} \bB_d$ of size $|\cV| \geq \frac{\Vol(\bB_d \setminus \frac{1}{2} \bB_d)}{\Vol(\tau \bB_d)} = \frac{\Vol(\bB_d) - 2^{-d}\Vol(\bB_d)}{\tau^d \Vol(\bB_d)} \geq \frac{1}{2}\left( \frac{1}{\tau} \right)^d$.

Next, we show that, for a well-structured class of contaminated mixture distributions, deleting on the order of $m$ samples leaves a common distribution. Consequently, any algorithm that handles unlearning requests of size up to $m$ cannot effectively distinguish these distributions.

\begin{lemma}
    \label{lemma:contaminated_deletion}
    Let $n \geq 1$, $v \in \bB_d$, and $\eta \in [0,1/5]$ be such that \(m:=5\eta n\) is an integer. There exists a distribution $P_{v, \eta}$ on $\bB_d$ with mean $\eta v$ such that, given $Z \in \bB_d^n$, we can construct an unlearning request $U(Z) \subseteq Z$ of size $m$ for which
    \begin{align*}
        \mP_{Z \sim P_{v, \eta}^n}(Z \setminus U(Z) \neq \textbf{0}_{n - m}) \leq 2^{-m},
    \end{align*}
    where $\textbf{0}_{n - m} := (0, \dots, 0)$ denotes the dataset consisting of $n - m$ zeroes.
\end{lemma}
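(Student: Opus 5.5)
Take $P_{v,\eta}$ to be the two-point mixture that returns $v$ with probability $\eta$ and $0$ with probability $1-\eta$. Both atoms lie in $\bB_d$, and the mean is $\eta v + (1-\eta)\cdot 0 = \eta v$. Given $Z \in \bB_d^n$, let $U(Z)$ consist of all coordinates of $Z$ that are nonzero, padded with arbitrary zero coordinates until $|U(Z)| = m$. If more than $m$ coordinates are nonzero, let $U(Z)$ be any $m$ of them. In every case $|U(Z)| = m$ and $U(Z) \subseteq Z$, so the request is legal.

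By construction, $Z \setminus U(Z) = \textbf{0}_{n-m}$ whenever the number of nonzero samples is at most $m$. Under $P_{v,\eta}^n$ the nonzero count $K$ is stochastically dominated by $\mathrm{Bin}(n,\eta)$, with equality when $v \neq 0$; when $v = 0$ every sample is zero and there is nothing to prove. The event $\{Z\setminus U(Z) \neq \textbf{0}_{n-m}\}$ is therefore contained in $\{K > m\} = \{K > 5\eta n\} = \{K > 5\,\E K\}$.

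The only real step is a tail bound of $2^{-m}$ on this event. I would apply the multiplicative Chernoff bound $\mP(K \geq t) \leq 2^{-t}$, valid for $t \geq 6\,\E K$, or equivalently $\mP(K \ge (1+\lambda)\E K) \le (e^\lambda/(1+\lambda)^{1+\lambda})^{\E K}$ with $\lambda = 4$. This gives $(e^4/5^5)^{\eta n} = (e^4/5^5)^{m/5}$. Since $(e^4/5^5)^{1/5} = e^{4/5}/5 \approx 0.445 < 1/2$, the bound is at most $2^{-m}$.

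A safer route that avoids the constant check is the direct union bound over subsets:
\[
\mP(K \ge m+1) \le \binom{n}{m+1}\eta^{m+1} \le \left(\frac{en\eta}{m+1}\right)^{m+1} \le \left(\frac{e}{5}\right)^{m+1} \le 2^{-m}.
\]
This uses $n\eta = m/5$ and $e/5 < 1/2$. I would present this version, since it holds for every integer $m \ge 1$ with no asymptotic slack. The case $m = 0$ (i.e.\ $\eta = 0$) is trivial because then every sample is $0$.
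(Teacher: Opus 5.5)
Your construction (the mixture $(1-\eta)1_0+\eta 1_v$, and the request that deletes all nonzero samples, padded to size $m$) is exactly the paper's. Your first tail estimate is also the paper's argument and is correct: the multiplicative Chernoff bound with $\lambda=4$ gives $(e^4/5^5)^{\eta n}=(e^{4/5}/5)^m\le 2^{-m}$.

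The gap is in the route you say you would actually present. The union-bound version is false. Since $e/5\approx 0.544>1/2$, the inequality $(e/5)^{m+1}\le 2^{-m}$ holds only for $m\le 7$. A sharper estimate of $\binom{n}{m+1}$ cannot rescue it. As $n\to\infty$ with $m$ fixed, $\binom{n}{m+1}\eta^{m+1}\to (m/5)^{m+1}/(m+1)!$, which by Stirling is of order $(e/5)^m/\sqrt{m}$. This exceeds $2^{-m}$ for large $m$: at $m=60$ and large $n$ it is $\approx 1.3\times 10^{-18}$, versus $2^{-60}\approx 8.7\times 10^{-19}$. The reason is that the union bound discards the factor $e^{-\E K}=e^{-m/5}$ that the Chernoff bound keeps; indeed $e^{-1/5}\cdot e/5=e^{4/5}/5<1/2$ is exactly the Chernoff base. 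So your claim that this version ``holds for every integer $m\ge 1$ with no asymptotic slack'' is wrong.

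The fix is to present the Chernoff computation, as the paper does. Also drop the aside ``$\mP(K\ge t)\le 2^{-t}$ for $t\ge 6\E K$''. It is not equivalent to the $\lambda=4$ bound, and its hypothesis fails here: with $t=m+1$ and $\E K=m/5$ it requires $m\le 5$.
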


\begin{proof}
    Consider the contaminated mixture
    \begin{align*}
        P_{v, \eta} := (1 - \eta) 1_0 + \eta 1_v.
    \end{align*}
    Sample $Z \sim P_{v, \eta}^n$ and let $K \sim \textrm{Bin}(n, \eta)$ be the number of non-zero entries in $Z$. Now, consider the unlearning request $U(Z)$ such that, when $K > m$, $U(Z)$ removes any $m$ entries from $Z$ and, when $K \leq m$, $U(Z)$ removes all of the non-zero entries as well as other entries arbitrarily so that exactly $m$ entries are removed in total.

    Clearly, $Z \setminus U(Z) = \textbf{0}_{n - m}$ as long as $K \leq m$, so by a multiplicative Chernoff bound we get that
    \begin{align*}
        \mP_{Z \sim P_{v, \eta}^n}(Z \setminus U(Z) \neq \textbf{0}_{n - m})
            \leq \mP_{K \sim \textrm{Bin}(n, \eta)}(K > 5\eta n)
            \leq \left( \frac{e^4}{5^5} \right)^{\eta n}
            \leq (2^{-5})^{\eta n}
            = 2^{-m}.
    \end{align*}
\end{proof}

Finally, we require a slight variant of the standard technique of packing lower bounds from the differential privacy literature.

\begin{lemma}
    \label{lemma:squared_packing_method}
    Let $P^*, P_1, \dots, P_k$ be $\R^d$-valued distributions such that we can find disjoint events $E_1, \dots, E_k$ for which $P_i(E_i) \geq p$ as well as $P_i \approx_{\eps, \delta} P^*$ for each $i \leq k$. Assume additionally that we can find $\mu \in \R^d$ so that $\E_{W \sim P^*}[\lVert W - \mu \rVert^2] \leq \alpha$ and $\lVert w - \mu \rVert \geq \eta$ for each $w \in E_i$ and $i \leq k$. Then
    \begin{align*}
        e^\eps \geq \frac{k\eta^2}{\alpha} (p - \delta).
    \end{align*}
\end{lemma}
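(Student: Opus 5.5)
We want to prove Lemma~\ref{lemma:squared_packing_method}. The plan is to lower bound the mass that $P^*$ places far from $\mu$ in two ways: from below, using the approximate indistinguishability on each disjoint event $E_i$; and from above, using Markov's inequality together with the second-moment bound $\E_{W\sim P^*}[\lVert W-\mu\rVert^2]\le\alpha$. Comparing the two bounds gives the inequality.

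\textbf{Step 1: lower bound on each event.} For each $i\le k$, apply the relation $P_i \approx_{\eps,\delta} P^*$ to the event $E_i$, in the direction that bounds $P_i$ by $P^*$. This gives $p \le P_i(E_i) \le e^{\eps} P^*(E_i) + \delta$, and hence
\[
P^*(E_i) \ge e^{-\eps}(p-\delta).
\]

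\textbf{Step 2: sum over the events.} Since the $E_i$ are pairwise disjoint, we have $\sum_i P^*(E_i) = P^*\bigl(\bigcup_i E_i\bigr)$. Every point of $\bigcup_i E_i$ satisfies $\lVert w-\mu\rVert\ge\eta$, so
\[
k e^{-\eps}(p-\delta) \le P^*\bigl(\lVert W-\mu\rVert \ge \eta\bigr).
\]

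\textbf{Step 3: Markov's inequality.} Apply Markov to $\lVert W-\mu\rVert^2$ at threshold $\eta^2$:
\[
P^*\bigl(\lVert W-\mu\rVert^2\ge\eta^2\bigr) \le \alpha/\eta^2 .
\]
Combining with Step 2 gives $k e^{-\eps}(p-\delta)\le \alpha/\eta^2$, which rearranges to $e^\eps \ge \frac{k\eta^2}{\alpha}(p-\delta)$.

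\textbf{Edge cases.} If $p\le\delta$, the right-hand side is nonpositive and the inequality holds trivially. The case $\alpha=0$ should be read with the convention that the right-hand side is $+\infty$ only when $p>\delta$; in that case Markov already yields a contradiction, so no such configuration exists and the statement holds vacuously. Measurability of $\{\lVert W-\mu\rVert\ge\eta\}$ is immediate.

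The main point requiring care is Step 1: we must use the direction of the max-divergence inequality that bounds $P_i$ in terms of $P^*$, not the reverse. Beyond that the argument is routine.
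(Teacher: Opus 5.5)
Your proof is correct and essentially matches the paper's: the paper restricts the second-moment integral to the disjoint union $\bigcup_i E_i$ and bounds it below by $\eta^2 \sum_i P^*(E_i)$, which is your Markov step written inline. It then applies the same per-event bound $P^*(E_i) \ge e^{-\eps}(p-\delta)$ to conclude.
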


\begin{proof}
    By disjointness of the $E_i$, we have
    \begin{align*}
        \alpha
            \geq \E_{W \sim P^*}[\lVert W - \mu \rVert^2]
            \geq \sum_{i = 1}^k \int_{E_i} \lVert w - \mu \rVert^2 \, dP^*(w)
            \geq \eta^2 \sum_{i = 1}^k P^*(E_i)
            \geq k\eta^2 e^{-\eps}(p - \delta),
    \end{align*}
    which immediately yields the desired bound.
\end{proof}

We now have assembled all of the tools needed to show \Cref{thm:lb}. For convenience, we will write $g(X)|_{X \sim P}$ to denote the distribution of $g(X), X \sim P$.

\begin{proof}[Proof of \Cref{thm:lb}]
    Set $\eta := \frac{m}{5n}$. If $\sqrt{\alpha} \geq \frac{1}{8}\eta$, we are done, so assume that $\sqrt{\alpha} < \frac{1}{8}\eta$ and set $\tau := \frac{4\sqrt{\alpha}}{\eta} < 1$. By \Cref{lemma:packing}, we can find a $\tau$-separated $\cV \subseteq \bB_d \setminus \frac{1}{2} \bB_d$ of size
    $|\cV| \geq \frac{1}{2}\left( \frac{\eta}{4\sqrt\alpha} \right)^d$.

    Now, for each $v \in \cV$, recall the hard distribution $P_{v, \eta}$ and corresponding delete request $U_v(Z)$ as in \Cref{lemma:contaminated_deletion}. We claim that $\talg(U_v(Z), Z)|_{Z \sim P_{v, \eta}^n} \approx_{\eps, \delta + 2^{-m}} \talg(\varnothing, \textbf{0}_{n - m})$. Indeed, for any event $E$,
    \begin{align*}
        \mP_{\talg, Z \sim P_{v, \eta}^n}(\talg(U_v(Z), Z) \in E)
            & \leq \mP_{\talg, Z \sim P_{v, \eta}^n}(Z \setminus U_v(Z) = \textbf{0}_{n - m} \text{ and } \talg(U_v(Z), Z) \in E) + 2^{-m} \\
            & \leq \E_{Z \sim P_{v, \eta}^n}\left[ \mP_{\talg}(\talg(U_v(Z), Z) \in E) \,\middle|\, Z \setminus U_v(Z) = \textbf{0}_{n - m} \right] + 2^{-m} \\
            & \leq \E_{Z \sim P_{v, \eta}^n}\left[ e^{\eps} \mP_{\talg}(\talg(\varnothing, Z \setminus U_v(Z)) \in E) + \delta \,\middle|\, Z \setminus U_v(Z) = \textbf{0}_{n - m} \right] + 2^{-m} \\
            & = e^{\eps} \mP_{\talg}(\talg(\varnothing, \textbf{0}_{n - m}) \in E) + \delta + 2^{-m}
    \end{align*}
    and, analogously, $\mP_{\talg}(\talg(\varnothing, \textbf{0}_{n - m}) \in E) \leq e^\eps \mP_{\talg, Z \sim P_{v, \eta}^n}(\talg(U_v(Z), Z) \in E) + \delta + 2^{-m}$ as well.

    On the other hand, by Markov's inequality and our assumption on the mean squared error, for every $v \in \cV$ we have $\mP_{\talg, Z \sim P_{v, \eta}^n}(\talg(U_v(Z), Z) \in E_v) \geq 3/4$ where
    $E_v := \{ w : \lVert w - \eta v \rVert \leq 2\sqrt{\alpha} \}$.
    Since the packing $\cV$ has separation $\tau = \frac{4\sqrt{\alpha}}{\eta}$, the $E_v$ are disjoint. In addition, we have
    \begin{align*}
        \E_{\talg}\left[ \lVert \talg(\varnothing, \textbf{0}_{n - m}) \rVert^2 \right]
            = \E_{\talg, Z \sim 1_0^{n - m}}\left[ \lVert \talg(\varnothing, Z) \rVert^2 \right]
            \leq \alpha
    \end{align*}
    by the mean squared error assumption. Finally, for all $v \in \cV$ and $w \in E_v$, since $\lVert v \rVert > 1/2$, we must have
    \begin{align*}
        \lVert w \rVert
            \geq \lVert \eta v \rVert - \lVert w - \eta v \rVert
            \geq \eta/2 - 2\sqrt{\alpha}
            \geq \eta/4.
    \end{align*}
    Altogether, \Cref{lemma:squared_packing_method} yields
    \begin{align*}
        e^\eps
            \geq \frac{|\cV| (\eta/4)^2}{\alpha}\left( \frac{3}{4} - \delta - 2^{-m} \right)
            \geq \frac{\eta^2}{640 \alpha}\left( \frac{\eta}{4\sqrt\alpha} \right)^d
            \geq \left( \frac{\eta}{640\sqrt\alpha} \right)^{d + 2}
            = \left( \frac{m}{3200n\sqrt\alpha} \right)^{d + 2}.
    \end{align*}
    In particular, we have
    $\alpha \gtrsim \left( \frac{m}{n} \right)^2 e^{-2\eps/(d + 2)}$.
\end{proof}

\section{Conclusion}

We determined the minimax optimal rate for $(\eps, \delta)$-approximate machine unlearning in smooth strongly convex stochastic optimization up to a condition-number factor. Our results show that the optimal excess population risk consists of the usual statistical error plus an unlearning penalty that depends exponentially on the ratio \(\varepsilon/d\). In particular, retraining from scratch is statistically optimal when $\eps \lesssim d$, whereas in the regime $\eps \gg d$, our novel algorithm improves exponentially over retraining from scratch. For mean estimation over the unit ball, our upper and lower bounds match up to constants in the exponent, giving an essentially sharp characterization of the statistical price of unlearning in this canonical setting.

Several questions remain open for future work. First, our upper and lower bounds for general smooth strongly convex SCO differ by a condition-number factor; closing this gap would give a fully sharp minimax characterization beyond mean estimation and would help pave the way for nonsmooth and non-strongly convex SCO algorithms. Extending the minimax theory of unlearning to nonconvex optimization is an important goal to aim for. Finally, we leave open the optimal rates for unlearning under explicit storage and/or computational constraints. In particular, it is an important question whether the statistically optimal rate can be achieved by an algorithm that does not require a full pass over the dataset.

\begin{ack}
    We thank Hilal Asi for early discussions as well as Jacob Imola for discussions on efficient sampling.
    MR was supported by an NSERC CGS-D scholarship. GK was supported by a Canada CIFAR AI Chair, an NSERC Discovery Grant, and an Ontario Early Researcher Award.
\end{ack}

\clearpage
\bibliographystyle{alpha}
\bibliography{references}

\appendix

\section*{Appendix}

\section{Additional Related Work}
\label{app:related-work}

We discuss additional related work on machine unlearning, certified removal, and differentially private baselines. See the introduction for the works most directly comparable to our results.

\paragraph{Certified removal and approximate data unlearning.}
Guo et al.~\cite{guo2020certified} introduced certified data removal, requiring that a model after data removal be statistically indistinguishable from one trained without the removed data, and developed certified-removal mechanisms for linear classifiers. Izzo et al.~\cite{izzo2021approximate} proposed approximate unlearning methods for linear and logistic models whose unlearning-time cost is linear in the feature dimension and independent of the number of training samples. These works emphasize efficient approximate unlearning for specific model classes, while our work characterizes minimax population-risk rates for \(\varepsilon\)-unlearning in smooth strongly convex stochastic optimization.

\paragraph{Practical machine unlearning and exact unlearning frameworks.}
Bourtoule et al.~\cite{bourtoule2021machine} introduced SISA training, a practical framework for accelerating unlearning by sharding, isolating, slicing, and aggregating the training procedure. Their work helped popularize machine unlearning as a practical data-governance problem and focused primarily on reducing unlearning-time computation relative to retraining. This line of work is complementary to ours: we focus on the information-theoretic statistical cost of approximate unlearning in stochastic optimization.

\paragraph{Gradient-based and stability-based unlearning.}
Neel et al.~\cite{neel2021descent} study data unlearning for convex models and introduce gradient-based unlearning algorithms that can handle long sequences of adversarial updates with per-unlearning runtime and steady-state error not growing with the sequence length. Ullah et al.~\cite{ullah2021machine} connect unlearning to total-variation stability and design noisy-SGD-based algorithms with efficient unlearning procedures. Chien et al.~\cite{chien2024certified} later study certified machine unlearning via projected noisy stochastic gradient descent and establish approximate unlearning guarantees under convexity assumptions. These works are algorithmic and computationally motivated; whereas our focus is the minimax statistical rate.

\paragraph{Certified unlearning for convex and strongly convex learning.}
Sekhari et al.~\cite{sekhari2021remember} initiated a population-risk study of certified unlearning for convex learning and gave algorithms with unlearning-capacity guarantees, including a Newton-step method for smooth strongly convex losses. Their method improves over DP baselines in some regimes, but requires stronger smoothness assumptions such as Lipschitz Hessians. In Appendix~\ref{app:saks-improved}, we sharpen the analysis of this algorithm. Youssef et al.~\cite{youseffutility} study utility guarantees for in-distribution unlearning and propose a warm-start ERM procedure. In Appendix~\ref{app:youseff-corrected}, we identify a gap in their population-risk argument and provide a corrected retraining-level guarantee. Van Waerebeke et al.~\cite{van2025forget} study computational aspects of unlearning for strongly convex losses. These works are the closest algorithmic precursors to ours.

\paragraph{DP-based and restricted unlearning.}
Differential privacy gives a generic route to unlearning: by group privacy, an algorithm private enough for changes of size \(m\) automatically satisfies an unlearning guarantee for unlearning sets of size \(m\). Huang and Canonne~\cite{huang2023tight} give tight bounds for machine unlearning via differential privacy and for restricted unlearning models, including settings with a DP-based algorithm that does not use side information or the unlearning set. Their results show that DP-based unlearning is optimal in these restricted models. Our results show that unrestricted unlearning is more powerful than DP-based unlearning: by using the actual unlearning set and side information, retrain-from-scratch achieves superior excess risk to DP when $\eps \le d$ and our approximate unlearning algorithm can achieve an exponentially smaller unlearning penalty when \(\varepsilon\gg d\).

\paragraph{Other optimization settings.}
Liu et al.~\cite{liu2023certified} study certified unlearning for minimax models and derive generalization rates and unlearning-capacity bounds for convex-concave and strongly convex-strongly concave settings. Zou et al.~\cite{zou2025certified} study certified machine unlearning in proportional high-dimensional regimes, analyzing Newton-style procedures when the model dimension and sample size grow together. These works address different optimization or asymptotic settings and are complementary to our minimax analysis for smooth strongly convex stochastic optimization.

\paragraph{Broader unlearning literature.}
There is also a large empirical and systems-oriented literature on machine unlearning for neural networks, graphs, federated learning, recommendation systems, and foundation models. These works address important practical settings and evaluation questions, but typically do not provide excess risk bounds under certified \(\varepsilon\)-unlearning. We therefore focus our theoretical comparisons on certified unlearning methods and DP-based baselines with formal guarantees.

\section{Matching Bounds for Unlearning Under the Weaker Utility Assumption}
\label{app:post_unlearning}

In this section, we consider the effect of replacing the expected excess unlearning risk measure in \Cref{def:unlearning_utility} with the weaker definition seen in prior work, which we call expected excess \textit{post-unlearning} risk.

\begin{definition}
    \label{def:post_unlearning_utility}
    We say that $\talg$ achieves expected excess post-unlearning risk $\alpha$ if for every distribution $P \in \Delta(\ZZ)$ and any adversarial unlearning request $U(Z) \subseteq Z$ of size $|U(Z)| \leq m$, we have
    \[
        \E_{\talg, Z \sim P^n}\left[ F_P(\talg(U(Z), Z)) - F_P^* \right] \leq \alpha.
    \]
\end{definition}

\subsection{Upper Bound for Post-Unlearning}

\begin{theorem}[Post-Unlearning Upper Bound]
    \label{thm:core-diffusion-upper}
    Grant \Cref{ass:smooth-sc-sco} and assume $d \geq 2$. Then the unlearner \Cref{alg:core-diffusion} satisfies
    \(\varepsilon\)-unlearning and requires $O(d)$ time for randomization as well as $\tilde O(\kappa \cdot n)$ gradient and projection queries. Moreover, \Cref{alg:core-diffusion} achieves expected excess post-unlearning risk 
    \[
        \alpha
            =
            O\left(
            \kappa\frac{L^2}{\mu}
            \left(
                \frac1n+
                \left(\frac mn\right)^2e^{-2\eps/d}
            \right)
            \right).
    \]
    In the special case of mean estimation over the unit ball, i.e.
    \(\WW=\ZZ=\bB_d\) and \(f(w,z)=\frac12\|w-z\|^2\), \Cref{alg:core-diffusion} achieves expected excess post-unlearning risk
    \[
        \alpha = O\left(
            \frac1n+
            \left(\frac mn\right)^2e^{-2\eps/d}
        \right).
    \]
\end{theorem}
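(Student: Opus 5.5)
The plan is to mirror the three-step proof of \Cref{thm:core-swap-upper}. First establish $\varepsilon$-unlearning by a single likelihood-ratio computation, then bound $\E_{\talg}\|\talg(U,Z)-\tilde w_Z\|^2$, and finally combine with \Cref{lem:erm-distance}, \Cref{lem:erm-approximation} and $\beta$-smoothness exactly as before. I expect \Cref{alg:core-diffusion} to differ from \Cref{alg:core-swap} only in the dry-run distribution. Under \Cref{def:post_unlearning_utility} we never need the dry run $\talg(\varnothing,S)$ to be accurate, so the dry run can put \emph{all} its mass on the wide ball:
\[
    W_{\varnothing,S}=\Unif\bigl((\tau+r)\bB_d+\tilde w_S\bigr)=:\Unif(B).
\]
The true run keeps the core-swap form:
\[
    W_{U,T}=(1-\eta)\,\Unif\bigl(\tau\bB_d+\tilde w_T\bigr)+\eta\,\Unif(B),
\]
followed by projection onto $\WW$. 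I would take $r$ and $\rho$ as in \Cref{alg:core-swap}, $\eta=e^{-\varepsilon}$, and $\tau=r\min\{1,c\,e^{-\varepsilon/d}\}$ for a suitable constant $c$. For an empty request $T=S$ and the two formulas coincide, so this is consistent.

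\textbf{Unlearning.} Fix $U\subseteq Z$ with $|Z\setminus U|\ge n-m$, and set $S=Z\setminus U$ and $G'=\tau\bB_d+\tilde w_Z$. By the stability bound used in \Cref{prop:core-swap-unlearning}, $\|\tilde w_Z-\tilde w_S\|\le r$, so $G'\subseteq B$ and both distributions are supported on $B$. We only need the density ratio on two regions.
\begin{itemize}
    \item On $B\setminus G'$ the ratio $dW_{U,Z}/dW_{\varnothing,S}$ equals $\eta=e^{-\varepsilon}$.
    \item On $G'$ the ratio equals $(1-\eta)\bigl(\tfrac{\tau+r}{\tau}\bigr)^d+\eta$.
\end{itemize}
This second value must lie in $[e^{-\varepsilon},e^{\varepsilon}]$. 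It is at least $1$, so the constraint is $(1-e^{-\varepsilon})(1+r/\tau)^d\le e^\varepsilon-e^{-\varepsilon}$, which holds once $(1+r/\tau)^d\le e^{\varepsilon}$. With $\tau= r\,c\,e^{-\varepsilon/d}$ one gets $1+r/\tau\le e^{\varepsilon/d}$ when $\varepsilon$ is large compared to $d$. I would pick $\tau$ (and, in the small-$\varepsilon$ regime, $\eta$) exactly so the ratio equals $e^\varepsilon$, as the paper does with its explicit formula for $\eta$. In the regime $\varepsilon\lesssim d$ I would fall back to $\tau\asymp r$, at a cost of only $O(r^2)$, which is already within the claimed bound. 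Projection is postprocessing, so it does not affect this.

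\textbf{Distance to ERM and risk.} As in \Cref{prop:core-swap-erm-distance}, projection onto $\WW\ni\tilde w_Z$ is nonexpansive, so
\[
    \E_{\talg}\|\talg(U,Z)-\tilde w_Z\|^2\le\tau^2+\eta\,(\tau+2r)^2\lesssim r^2\bigl(e^{-2\varepsilon/d}+e^{-\varepsilon}\bigr).
\]
The assumption $d\ge2$ enters exactly here: it gives $e^{-\varepsilon}\le e^{-2\varepsilon/d}$. Combining with $r^2\lesssim \frac{L^2}{\mu^2}\bigl((m/n)^2+1/n\bigr)$, and using $n-m=\Omega(n)$, yields $O\bigl(\frac{L^2}{\mu^2}\bigl((m/n)^2e^{-2\varepsilon/d}+1/n\bigr)\bigr)$. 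Note that in \Cref{def:post_unlearning_utility} the dataset is $Z\sim P^n$, so only the true run is evaluated. The decomposition $F_P(w)-F_P^*\le\beta\|w-\tilde w_Z\|^2+\beta\|\hat w_Z-w_P^*\|^2+\beta\|\tilde w_Z-\hat w_Z\|^2$, together with \Cref{lem:erm-distance} and $\rho^2=L^2/(\mu^2n)$, then gives the SCO bound. The mean-estimation bound uses $F_P(w)-F_P^*=\tfrac12\|w-\mu_P\|^2$ and $\E\|\hat w_Z-\mu_P\|^2\le1/n$. The runtime claims are identical to those of \Cref{thm:core-swap-upper}.

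The main obstacle is the calibration of $(\tau,\eta)$ in the unlearning step. The two constraints $\eta\ge e^{-\varepsilon}$ and $(1-\eta)(1+r/\tau)^d+\eta\le e^\varepsilon$ must hold simultaneously, while keeping $\tau^2+\eta r^2\lesssim r^2e^{-2\varepsilon/d}$ uniformly over all $\varepsilon$. I would handle this with a two-case split, $\varepsilon\le C d$ versus $\varepsilon>Cd$, exactly paralleling the $\min\{1,2e^{-\varepsilon/(d+2)}\}$ split in \Cref{alg:core-swap}.
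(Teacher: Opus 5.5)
There is a genuine gap in how your algorithm handles empty requests. You claim that ``for an empty request $T=S$ and the two formulas coincide'', but this is false. With $U=\varnothing$, your true-run formula gives $(1-\eta)\,\Unif(\tau\bB_d+\tilde w_S)+\eta\,\Unif(B)$, while your dry-run formula gives $\Unif(B)$. These agree only if $\eta=1$.

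Both are calls of the form $\talg(\varnothing,\cdot)$, so your algorithm is not well defined. Neither size-oblivious way of resolving this works:
\begin{itemize}
\item If every empty-request call returns $\Unif(B)$, the adversary in \Cref{def:post_unlearning_utility} may take $U(Z)=\varnothing$, since only $|U(Z)|\le m$ is required. The output is then uniform on a ball of radius $\tau+r\ge r$ about $\tilde w_Z$. Its squared error is of order $r^2$, which is the retrain-from-scratch rate with no $e^{-2\eps/d}$ factor.
\item If every empty-request call returns the mixture, compare the two densities on $(\tau\bB_d+\tilde w_S)\setminus G'$. There the dry run has density $(1-\eta)/\Vol(\tau\bB_d)+\eta/\Vol(B)$, while the true run has only $\eta/\Vol(B)$. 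The ratio is $1+(1/\eta-1)K$ with $K=(1+r/\tau)^d$. With $\eta=e^{-\eps}$ this far exceeds $e^\eps$, and recalibrating $\eta$ just returns you to \Cref{alg:core-swap} and its $e^{-2\eps/(d+2)}$ rate.
\end{itemize}

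This is not a technicality. The proof of \Cref{thm:lb} uses accuracy only at two kinds of calls: $\talg(U_v(Z),Z)$ with $|Z|=n$, which \Cref{def:post_unlearning_utility} already covers, and the single dry run $\talg(\varnothing,\textbf{0}_{n-m})$. So an algorithm whose error is well below $(m/n)^2e^{-2\eps/(d+2)}$ must be inaccurate on $\talg(\varnothing,\textbf{0}_{n-m})$. Yet it must be accurate on $\talg(\varnothing,\textbf{0}_n)$. Hence its behaviour on empty requests must depend on the dataset size.

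The missing idea is exactly the branch in \Cref{alg:core-diffusion}: return $\Unif((\tau+r)\bB_d+\tilde w_S)$ only when $U=\varnothing$ and $|T|<n$, and the mixture otherwise. This is sound for both requirements:
\begin{itemize}
\item \emph{Unlearning.} Genuine requests act on $Z\in\ZZ^n$. So for $U\neq\varnothing$ the reference call on $Z\setminus U$ has fewer than $n$ points and lands in the first branch, while $\talg(U,Z)$ lands in the second. The case $U=\varnothing$ of the unlearning condition is trivial.
\item \emph{Utility.} \Cref{def:post_unlearning_utility} only evaluates $Z\sim P^n$, so every evaluated call uses the mixture, including $U(Z)=\varnothing$.
\end{itemize}
You did observe that only size-$n$ runs are evaluated, but you need to build that observation into the algorithm itself.

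With the branch in place, the rest of your argument is essentially the paper's:
\begin{itemize}
\item the density ratios $\eta$ on $B\setminus G'$ and $(1-\eta)K+\eta$ on $G'$;
\item the choice $\eta=\max\{e^{-\eps},(K-e^\eps)/(K-1)\}$ with $\tau=r\min\{1,2e^{-\eps/d}\}$, which gives $K\le e^\eps$ and hence $\eta=e^{-\eps}$ once $e^{\eps/d}\ge 2$, and costs only $O(r^2)$ otherwise;
\item the use of $d\ge 2$ to get $e^{-\eps}\le e^{-2\eps/d}$;
\item the same risk decomposition.
\end{itemize}
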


Critically, we note that \Cref{alg:core-diffusion} inspects the sizes of the dataset and the unlearning request to determine whether it is performing a dry or a true unlearning run. Although this algorithm achieves the optimal rate for the post-unlearning utility model, we suggest that this algorithm's behaviour goes against the spirit of machine unlearning. Therefore we advocate replacing the post-unlearning utility model \Cref{def:post_unlearning_utility} with the stronger utility model \Cref{def:unlearning_utility}.

\begin{figure}[htbp]
    \centering
    \includegraphics[width=0.85\linewidth]{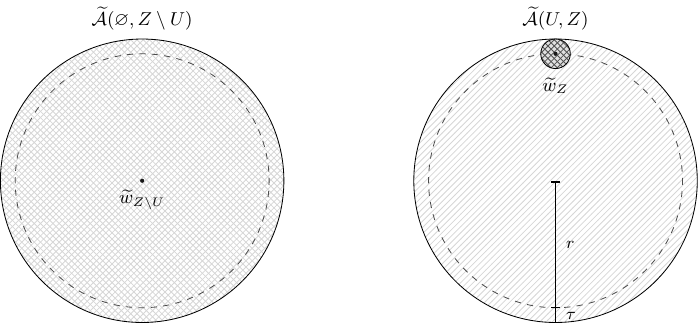}
    \caption{We compare the distribution of $\talg(\varnothing, Z \setminus U)$ to $\talg(U, Z)$ for the unlearner \Cref{alg:core-diffusion}. Lightly shaded regions are sampled uniformly with low probability and heavily shaded regions are sampled uniformly with high probability. In this case, $\talg(\varnothing, Z \setminus U)$ returns a noisy solution centered around the RFS ERM solution $\hat{w}_{Z \setminus U}$. $\talg(U, Z)$ has a similar distribution except some probability mass is transported into a small ball around the full ERM solution $\hat{w}_Z$.}
    \label{fig:core_diffusion_unlearner}
\end{figure}

\begin{algorithm}[t]
    \caption{Core-diffusion \(\varepsilon\)-unlearning for ERM}
    \label{alg:core-diffusion}
    \begin{algorithmic}[1]
        \REQUIRE Dataset \(T\); deletion set \(U\subseteq T\); parameters \(L,\mu,n,m,d,\varepsilon\)
        \STATE Set \(S\gets T\setminus U\), \(\rho \gets \frac{L}{\mu \sqrt{n}}\), \(r\gets \frac{2Lm}{\mu n} + 2\rho\), \(\tau\gets r\min\{1,2e^{-\varepsilon/d}\}\), \(K \gets \left(\frac{\tau + r}{\tau}\right)^d\), and \(\eta\gets \max\{e^{-\eps}, \frac{K - e^\eps}{K - 1} \} \)
        \STATE Compute deterministic $\tilde w_T$ and $\tilde w_S$ such that $\lVert \tilde w_T - \hat w_T \rVert, \lVert \tilde w_S - \hat w_S \rVert \leq \rho$

        \IF{\(U=\varnothing\) and \(|T|<n\)}
            \STATE Sample $w \sim \Unif((\tau + r)\bB_d + \tilde w_S)$
        \ELSE
            \STATE Sample $w \sim
                \begin{cases}
                    \Unif((\tau + r)\bB_d + \tilde w_S) & \text{w.p. } \eta \\
                    \Unif(\tau\bB_d + \tilde w_T) & \text{w.p. } 1 - \eta
                \end{cases}
            $
        \ENDIF
        \RETURN $\argmin_{w' \in \WW} \lVert w' - w \rVert$
    \end{algorithmic}
\end{algorithm}

\begin{proposition}
    \label{prop:core-diffusion-unlearning}
    The unlearner \Cref{alg:core-diffusion}, satisfies
    \(\varepsilon\)-unlearning.
\end{proposition}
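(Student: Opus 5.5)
We follow the proof of \Cref{prop:core-swap-unlearning}. Since unlearning is preserved by postprocessing, we drop the final projection onto $\WW$ and compare the pre-projection distributions. Fix $Z\in\ZZ^N$ and $U\subseteq Z$ with $|U|\le m$, and write $S=Z\setminus U$. The dry run $\talg(\varnothing,S)$ has $|S|<n$ whenever $U\neq\varnothing$, so it takes the first branch and outputs $W_0:=\Unif(B)$ with $B:=(\tau+r)\bB_d+\tilde w_S$. The true run outputs
\[
W_1:=\eta\,\Unif(B)+(1-\eta)\Unif(G'),\qquad G':=\tau\bB_d+\tilde w_Z .
\]
If $U=\varnothing$, both calls are literally the same call $\talg(\varnothing,Z)$, so there is nothing to prove. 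The main case is therefore $U\neq\varnothing$.

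\textbf{Containment.} First we check $G'\subseteq B$. By the stability bound (c.f.\ \cite[Lemma 6]{sekhari2021remember}), $\|\tilde w_Z-\tilde w_S\|\le \frac{2L|U|}{\mu|S|}+2\rho$. We need this to be at most $r=\frac{2Lm}{\mu n}+2\rho$. This step needs care, because the radius now uses $n$ instead of $n-m$, so we cannot simply reuse the earlier estimate. Here we use the finer form of the stability bound, $\|\hat w_Z-\hat w_S\|\le \frac{2L|U|}{\mu|Z|}$. This follows from comparing $\widehat F_Z$ and $\widehat F_S$ via strong convexity of $\widehat F_Z$: the gradient perturbation is $\frac{|U|}{|Z|}$ times a difference of averages of $L$-bounded gradients. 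If that refinement fails, we would take $r$ as in \Cref{alg:core-swap}; this only changes constants, since $n-m=\Omega(n)$. We also need $N\le n$ to hold in the true run for the bound $|U|/|Z|\le m/n$. If the true call itself has $|Z|<n$, it is handled identically.

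\textbf{Likelihood ratio.} The density ratio $dW_1/dW_0$ equals $\eta$ on $B\setminus G'$ and $\eta+(1-\eta)K$ on $G'$, where $K=\Vol(B)/\Vol(G')=\left(\frac{\tau+r}{\tau}\right)^d\ge 1$. It then suffices to check:
\begin{enumerate}
\item $\eta\ge e^{-\eps}$, which holds by the choice of $\eta$.
\item $\eta+(1-\eta)K\le e^{\eps}$. Rearranged, this is $\eta(1-K)\le e^\eps-K$, i.e.\ $\eta\ge\frac{K-e^\eps}{K-1}$ when $K>1$, which holds by the choice of $\eta$. When $K=1$, i.e.\ $\tau=r\cdot\infty$, this cannot occur because $r>0$.
\item $\eta\le 1$, so that $W_1$ is a valid mixture. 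This holds since $e^{-\eps}\le1$ and $\frac{K-e^\eps}{K-1}\le1$.
\item The lower bound $\eta+(1-\eta)K\ge \eta\ge e^{-\eps}$ is automatic.
\end{enumerate}
Both supports equal $B$, so the ratio lies in $[e^{-\eps},e^{\eps}]$ pointwise. This gives $(\eps,0)$-indistinguishability and hence $\eps$-unlearning. The expected obstacle is only the containment step, specifically the radius $r$ with denominator $n$.
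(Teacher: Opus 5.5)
Your argument is correct and is essentially the paper's proof: drop the projection, dispose of $U=\varnothing$, obtain $G'\subseteq B$ from the stability bound $\|\hat w_Z-\hat w_S\|\le 2L|U|/(\mu|Z|)$ (your gradient-perturbation derivation of this is right), and check the two density-ratio values $\eta$ and $\eta+(1-\eta)K$ against the two lower bounds built into the definition of $\eta$. One correction to your side remarks: the step $|U|/|Z|\le m/n$ requires $|Z|\ge n$, not $N\le n$, and it holds here precisely because \Cref{def:eps-unlearning} only quantifies over $Z\in\ZZ^n$, which is why the paper fixes $|Z|=n$. Your claim that a true call with $|Z|<n$ ``is handled identically'' is unjustified, since then $|U|/|Z|$ can exceed $m/n$ and the containment $G'\subseteq B$ is no longer guaranteed; and the fallback of enlarging $r$ would change the algorithm rather than prove the statement, so both remarks should simply be dropped.
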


\begin{proof}
    Unlearning is preserved by postprocessing, so we may exclude the final projection onto $\WW$ from our analysis.
    Now, fix $U \subseteq Z \in \ZZ^n$, $|U| \leq m$. If $U = \varnothing$, then clearly $\talg(U, Z) \equiv \talg(\varnothing, Z \setminus U)$, so assume $U \neq \varnothing$. In this case, the distribution of $\talg(\varnothing, Z \setminus U)$ before projection is
    \[
        W_{\varnothing, Z \setminus U} = \Unif(\underbrace{(\tau + r) \bB_d + \tilde{w}_{Z \setminus U}}_{=: B})
    \]
    whereas the distribution of running $\talg(U, Z)$ before projection is 
    \[
        W_{U, Z} = (1 - \eta) \Unif(\underbrace{\tau \bB_d + \tilde{w}_Z}_{=: G'}) + \eta \Unif(\underbrace{(\tau + r) \bB_d + \tilde{w}_{Z \setminus U}}_{= B}).
    \]

    We just need to argue that the likelihood ratio between these distributions lies in $[e^{-\eps}, e^\eps]$. Indeed, by the standard stability bound for strongly convex ERM (c.f. \cite[Lemma 6]{sekhari2021remember}),
    \[
        \lVert \tilde w_Z - \tilde w_{Z \setminus U} \rVert
            \leq \lVert \hat w_Z - \hat w_{Z \setminus U} \rVert + 2\rho
            \leq \frac{2Lm}{\mu n} + 2\rho
            = r.
    \]
    In particular, $G'$ is contained in $B$ and hence the likelihood ratio between these distributions differs only for $w$ contained in either $G'$ or $B$, exclusively. In the first case, if $w \in G'$, then
    \begin{align*}
        \frac{d \tilde W_{\varnothing, Z \setminus U}}{d \tilde W_{U, Z}}(w)
            = \frac{1/\Vol(B)}{(1 - \eta)/\Vol(G) + \eta/\Vol(B)}
            = \frac{1}{(1 - \eta)K + \eta}
            = \frac{1}{K - \eta(K - 1)}
            \in [e^{-\eps}, 1]
    \end{align*}
    since $\frac{K - e^\eps}{K - 1} \leq \eta \leq 1$. In the latter case, if $w \in B \setminus G'$, we have
    \begin{align*}
        \frac{d \tilde W_{\varnothing, Z \setminus U}}{d \tilde W_{U, Z}}(w)
            = \frac{1/\Vol(B)}{\eta/\Vol(B)}
            = \frac{1}{\eta}
            \in [1, e^\eps]
    \end{align*}
    since $e^{-\eps} \leq \eta \leq 1$. In particular, $\tilde W_{\varnothing, Z \setminus U}$ and $\tilde W_{U, Z}$ are $(\eps, 0)$-indistinguishable, as desired.
\end{proof}

\begin{proposition}
    \label{prop:core-diffusion-erm-distance}
    Assume $d \geq 2$. For every dataset \(Z\in\cZ^n\) and every deletion set
    \(U\subseteq Z\) with \(|U|\le m\), the output $\talg(U,Z)$ of \Cref{alg:core-diffusion} satisfies
    \[
        \E_{\talg}
        \left[
            \left\|
            \talg(U,Z)-\tilde w_Z
            \right\|^2
        \right]
        =
        O\left(
            \frac{L^2}{\mu^2}
            \left(\frac mn\right)^2
            e^{-2\eps/d} +
            \frac{L^2}{\mu^2 n}
        \right).
    \]
\end{proposition}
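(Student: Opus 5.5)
The plan is to bound the second moment of the pre-projection sample around $\tilde w_Z$ and then control the mixture weight $\eta$ by a case analysis on $\eps/d$. This follows the proof of \Cref{prop:core-swap-erm-distance}; only the bound on $\eta$ changes.

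\textbf{Reduction to the pre-projection sample.} Since the call is on $Z\in\cZ^n$, the algorithm always takes the second branch (even when $U=\varnothing$, because $|T|=n$). So $\talg(U,Z)$ is the Euclidean projection onto $\WW$ of
\[
w\sim W=(1-\eta)\Unif(\tau\bB_d+\tilde w_Z)+\eta\,\Unif((\tau+r)\bB_d+\tilde w_{Z\setminus U}).
\]
The point $\tilde w_Z$ lies in $\WW$, since it is produced by projected gradient descent via \Cref{lem:erm-approximation}, and projection onto a convex set is nonexpansive. Hence
\[
\E\|\talg(U,Z)-\tilde w_Z\|^2\le \E_{w\sim W}\|w-\tilde w_Z\|^2\le (1-\eta)\tau^2+\eta\max_{w\in(\tau+r)\bB_d+\tilde w_{Z\setminus U}}\|w-\tilde w_Z\|^2 .
\]

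\textbf{Bounding the two terms.} By the stability bound already used in \Cref{prop:core-diffusion-unlearning}, $\|\tilde w_Z-\tilde w_{Z\setminus U}\|\le r$. Since $\tau\le r$, the max term is at most $(\tau+2r)^2\lesssim r^2$. Moreover
\[
r\lesssim \frac{Lm}{\mu n}+\frac{L}{\mu\sqrt n},
\]
so $r^2\lesssim \frac{L^2}{\mu^2}\big(\frac mn\big)^2+\frac{L^2}{\mu^2 n}$. It therefore suffices to show both $\tau^2\lesssim r^2e^{-2\eps/d}$ and $\eta\lesssim e^{-2\eps/d}$. The bound on $\tau^2$ is immediate from $\tau=r\min\{1,2e^{-\eps/d}\}$.

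\textbf{Bounding $\eta$ (the main step).} I split into two cases.
\begin{itemize}
\item If $2e^{-\eps/d}>1$, then $e^{-2\eps/d}>1/4$, so $\eta\le 1\lesssim e^{-2\eps/d}$ trivially.
\item If $2e^{-\eps/d}\le1$, set $x:=e^{\eps/d}\ge2$. Then $\tau=2r/x$, so $\frac{\tau+r}{\tau}=1+x/2\le x$, and hence $K\le x^d=e^{\eps}$. Consequently $\frac{K-e^\eps}{K-1}\le0$; note $K>1$, so this quantity is well defined. The maximum defining $\eta$ is then attained at $\eta=e^{-\eps}$.
\end{itemize}
In the second case the assumption $d\ge2$ gives $e^{-\eps}\le e^{-2\eps/d}$. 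This is exactly where $d\ge 2$ enters, and it is the only delicate point: one must check that the max in $\eta$ is not dominated by the second argument. Combining the bounds on $\tau^2$, $\eta$ and the max term yields the claimed
\[
O\!\left(\frac{L^2}{\mu^2}\Big(\frac mn\Big)^2e^{-2\eps/d}+\frac{L^2}{\mu^2n}\right).
\]
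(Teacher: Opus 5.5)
Your proposal is correct and follows the paper's proof essentially step for step. It uses the same reduction to the pre-projection mixture via nonexpansiveness and $\tilde w_Z \in \WW$, the same bounds $\tau^2 \lesssim r^2 e^{-2\eps/d}$ and $\max\|w-\tilde w_Z\|^2 \lesssim r^2$, and the same two-case analysis showing $K \le e^{\eps}$, so that $\eta = e^{-\eps} \le e^{-2\eps/d}$ when $d \ge 2$; your explicit note that a call with $T = Z \in \cZ^n$ always takes the second branch of \Cref{alg:core-diffusion} is a detail the paper leaves implicit.
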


\begin{proof}
    Recall as in the proof of \Cref{prop:core-diffusion-unlearning} that $\talg(U, Z)$ is exactly the projection of
    \[
        w \sim W = (1 - \eta) \Unif(\tau \bB_d + \tilde{w}_Z) + \eta \Unif((\tau + r) \bB_d + \tilde{w}_{Z \setminus U})
    \]
    onto the convex domain $\WW$. But $\tilde{w}_Z \in \WW$, so  
    \begin{align*}
        \E_{\talg}[\| \talg(U, Z) - \tilde w_Z \|^2]
            \leq \E_{w \sim W}[\| w - \tilde w_Z \|^2]
            \leq (1 - \eta) \tau^2 + \eta \max_{w \in (\tau + r) \bB_d + \tilde{w}_{Z \setminus U}} \| w - \tilde w_Z \|^2.
    \end{align*}

    The first term is bounded by
    \[
        \tau^2
            = \left(r \min\left\{1, 2e^{-\eps/d}\right\}\right)^2
            \lesssim \frac{L^2}{\mu^2} \left( \frac{m}{n} \right)^2 e^{-2\eps/d} +
            \frac{L^2}{\mu^2 n}.
    \]

    As for the second term, recalling that $\|\tilde w_{Z \setminus U} - \tilde w_Z\| \leq r$ as well as our choice of $\tau \leq r$, we have
    \[
        \|w - \tilde w_Z\|^2
            \lesssim \|w - \tilde w_{Z \setminus U}\|^2 + \|\tilde w_{Z \setminus U} - \tilde w_Z\|^2
            \leq (\tau + r)^2 + r^2
            \lesssim r^2
            \lesssim \frac{L^2}{\mu^2} \left( \frac{m}{n} \right)^2 + \frac{L^2}{\mu^2 n}
    \]
    for any \(w\in(\tau + r) \bB_d + \hat{w}_{Z \setminus U}\). Moreover, if $2e^{-\eps/d} > 1$, then $\eta \leq 1 = 1^2 \lesssim e^{-2\eps/d}$. More importantly, if $2e^{-\eps/d} \leq 1$, then $\tau = 2re^{-\eps/d}$ by choice of $\tau$, in which case
    \begin{align*}
        K
            = \left( \frac{\tau + r}{\tau} \right)^d
            \leq \left( \frac{2r}{\tau} \right)^d
            = \left( \frac{2r}{2re^{-\eps/d}} \right)^d
            = e^\eps,
    \end{align*}
    and hence
    \begin{align*}
        \eta
            = \max\left\{ e^{-\eps}, \frac{K - e^\eps}{K - 1} \right\}
            = e^{-\eps}
            \leq e^{-2\eps/d}
    \end{align*}
    in this case as well. The result now follows by combining these bounds.
\end{proof}

As in the proof of \Cref{thm:core-swap-upper}, \Cref{thm:core-diffusion-upper} now follows by combining
\Cref{prop:core-diffusion-unlearning,prop:core-diffusion-erm-distance} with \Cref{lem:erm-distance,lem:erm-approximation}.

\subsection{Lower Bounds for $(\eps, \delta)$-Post-Unlearning}

In this section we prove a lower bound matching the rate of \Cref{alg:core-diffusion}.
At its core, this reduces to showing a mean estimation lower bound.

\begin{theorem}
    \label{thm:post_unlearning_lb}
    Let $\talg(U, Z)$ be an $(\eps, \delta)$-unlearning algorithm with $\delta \leq 1/7$ and suppose that, for all distributions $P$ on $\bB_d$ with mean $\mu_P$ and unlearning requests $U(Z) \subseteq Z$ with $|U(Z)| \leq m$, we have the mean squared error guarantee
    \begin{align*}
        \mP_{\talg, Z \sim P^n}\left( \lVert \talg(U(Z), Z) - \mu_P \rVert^2 \leq \alpha \right) \geq \frac{2}{3}.
    \end{align*}
    Then, it must be the case that
    \begin{align*}
        \alpha = \Omega\left(\frac{1}{n} + \left(\frac{m}{n} \right)^2 e^{-2\eps/d} \right).
    \end{align*}
\end{theorem}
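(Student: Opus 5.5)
We follow the packing strategy from the proof of \Cref{thm:lb}, with one change. Under \Cref{def:post_unlearning_utility} we no longer get any utility guarantee for the dry-run reference $\talg(\varnothing, \textbf{0}_{n-m})$, so \Cref{lemma:squared_packing_method} is unavailable; its role was to use the reference's small second moment. Instead we apply a pure counting form of the packing argument: the reference distribution must put mass at least $e^{-\eps}(p-\delta')$ on each of many disjoint events, and total mass is at most $1$. This loses the extra factor $\eta^2/\alpha$ that previously turned $d$ into $d+2$, which is exactly why the exponent becomes $2\eps/d$. The $\Omega(1/n)$ term is the standard non-private mean-estimation lower bound, so we only treat the unlearning term.

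\textbf{Setup.} Set $\eta := \frac{m}{5n} \le 1/5$, so $m = 5\eta n$ is an integer as \Cref{lemma:contaminated_deletion} requires. If $\sqrt\alpha \ge \eta/2$ we are done, so assume $\sqrt\alpha < \eta/2$ and put $\tau := 2\sqrt\alpha/\eta < 1$. By \Cref{lemma:packing} there is a $\tau$-separated $\cV \subseteq \bB_d \setminus \frac12\bB_d$ with $|\cV| \ge \frac12 (\eta/(2\sqrt\alpha))^d$.

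\textbf{Indistinguishability and disjoint events.}
\begin{enumerate}
\item For each $v \in \cV$, take $P_{v,\eta}$ (mean $\eta v$) and the request $U_v(Z)$ of size exactly $m$ from \Cref{lemma:contaminated_deletion}. This request is admissible under \Cref{def:post_unlearning_utility}, which requires $Z \sim P^n$ and $|U| \le m$.
\item Repeating verbatim the conditioning computation in the proof of \Cref{thm:lb} gives
\[
\talg(U_v(Z), Z)|_{Z \sim P_{v,\eta}^n} \approx_{\eps,\, \delta + 2^{-m}} Q := \talg(\varnothing, \textbf{0}_{n-m}).
\]
\item Let $E_v := \{w : \|w - \eta v\| \le \sqrt\alpha\}$. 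The high-probability hypothesis gives $\mP(\talg(U_v(Z), Z) \in E_v) \ge 2/3$.
\item For $v \ne v'$ we have $\|\eta v - \eta v'\| > \eta\tau = 2\sqrt\alpha$, so the $E_v$ are pairwise disjoint.
\end{enumerate}

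\textbf{Counting.} By the approximate-unlearning inequality in the direction from the true run to $Q$, each $Q(E_v) \ge e^{-\eps}(2/3 - \delta - 2^{-m})$. Summing over the disjoint $E_v$ gives
\[
1 \ge |\cV|\, e^{-\eps}\bigl(\tfrac23 - \delta - 2^{-m}\bigr).
\]
With $\delta \le 1/7$ and $m \ge 1$, the bracket is at least $\frac23 - \frac17 - \frac12 = \frac1{42} > 0$. This is where the hypothesis $\delta \le 1/7$ is used: it keeps the bracket positive even in the worst case $m = 1$. Hence
\[
e^{\eps} \ge \tfrac{1}{84}\left(\frac{\eta}{2\sqrt\alpha}\right)^d,
\quad\text{so}\quad
\sqrt\alpha \ge \frac{\eta}{2}\,(84)^{-1/d} e^{-\eps/d} \gtrsim \frac{m}{n}e^{-\eps/d}.
\]
Squaring gives $\alpha = \Omega((m/n)^2 e^{-2\eps/d})$.

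\textbf{Main obstacle.} The only delicate point is realizing that the argument must avoid any property of $Q$ beyond its total mass being $1$. The algorithm of \Cref{alg:core-diffusion} shows why: it makes $Q$ arbitrarily inaccurate, so any bound relying on the dry run's accuracy would fail. The counting version is also what makes a high-probability guarantee, rather than an expectation guarantee, the natural hypothesis here. Everything else is routine bookkeeping of constants.
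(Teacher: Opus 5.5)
Your proposal is correct and follows the paper's proof essentially step for step. You use the same choices $\eta = m/(5n)$ and $\tau = 2\sqrt{\alpha}/\eta$, the same contaminated-deletion reduction to the common reference $\talg(\varnothing, \textbf{0}_{n-m})$, the same disjoint balls of radius $\sqrt{\alpha}$, and the same pure counting packing lemma (which needs no accuracy of the dry run) instead of the second-moment version. Your constant $\frac{1}{84}$ is in fact the correct arithmetic; the paper's displayed $\frac{1}{42}$ drops the factor $\frac12$ from $|\cV|$, which does not affect the $\Omega(\cdot)$ conclusion.
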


By Markov's inequality, the same lower bound applies up to constants for any algorithm with \textit{expected} mean squared error $O(\alpha)$. Moreover, as in \Cref{sec:lb} the sampling error lower bound $\Omega(1/n)$ holds even without unlearning requirements, so it just remains to derive the second term. As in \Cref{sec:lb}, this implies the following SCO lower bound:

\begin{corollary}
    Suppose the loss satisfies \Cref{ass:smooth-sc-sco} and that there is an $(\eps, \delta)$-unlearning algorithm $\talg$ with $\delta \leq 1/7$ and excess post-unlearning risk $\alpha$. Then we must have
    \begin{align*}
        \alpha \geq \Omega\left(\frac{L^2}{\mu}\left( \frac{1}{n} + \left( \frac{m}{n} \right)^2e^{-2\eps/d}\right)\right)
    \end{align*}
\end{corollary}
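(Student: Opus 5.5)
The first term, $\Omega(L^2/(\mu n))$, is the classical sampling lower bound for strongly convex SCO without any unlearning constraint~\cite{ny83}. So the plan is to get the second term from \Cref{thm:post_unlearning_lb} through a scaled mean-estimation instance. Set $R := \frac{L}{2\mu}$ and take $\WW = R\bB_d$, $\ZZ = R\bB_d$, and $f(w,z) = \frac{\mu}{2}\|w - z\|^2$. I will check \Cref{ass:smooth-sc-sco} for this instance. The domain is closed and convex with diameter $2R$. The loss is $\mu$-strongly convex and $\mu$-smooth, so it is $\beta$-smooth for any $\beta \ge \mu$ and the instance is admissible for the given $\beta$. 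Its gradient $\mu(w - z)$ has norm at most $\mu \cdot 2R = L$ on $\WW$, so the loss is $L$-Lipschitz. Finally, $F_P(w) = \frac{\mu}{2}\|w - \mu_P\|^2 + \mathrm{const}$ with $\mu_P \in R\bB_d = \WW$, so the population minimizer is the unconstrained one $\mu_P$ and $\nabla F_P(\mu_P) = 0$. The excess risk is exactly $F_P(w) - F_P^* = \frac{\mu}{2}\|w - \mu_P\|^2$.

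Now let $\talg$ be an $(\eps,\delta)$-unlearning algorithm with excess post-unlearning risk $\alpha$ on this instance. I will build a unit-ball mean estimator $\talg'(U, Z) := \frac{1}{R}\talg(RU, RZ)$ for $Z \in \bB_d^n$. It satisfies $(\eps,\delta)$-unlearning for the following reasons:
\begin{itemize}
    \item scaling the data is a bijection that commutes with deletion, since $R(Z\setminus U) = RZ \setminus RU$;
    \item dividing the output by $R$ is post-processing.
\end{itemize}
For any distribution $P$ on $\bB_d$ and any request $U(Z)$ with $|U(Z)| \le m$, the scaled distribution $RP$ lies on $\ZZ$, and the scaled request is admissible. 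Hence
\[
\E\bigl[\|\talg'(U(Z),Z) - \mu_P\|^2\bigr] = \frac{1}{R^2}\,\E\bigl[\|\talg(RU, RZ) - R\mu_P\|^2\bigr] \le \frac{2\alpha}{\mu R^2} = \frac{8\mu\alpha}{L^2}.
\]
Markov's inequality then shows that $\|\talg'(U(Z),Z) - \mu_P\|^2 \le 24\mu\alpha/L^2$ holds with probability at least $2/3$.

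Applying \Cref{thm:post_unlearning_lb} to $\talg'$, with $\delta \le 1/7$, gives $24\mu\alpha/L^2 = \Omega\bigl((m/n)^2 e^{-2\eps/d}\bigr)$, that is, $\alpha = \Omega\bigl(\frac{L^2}{\mu}(m/n)^2 e^{-2\eps/d}\bigr)$. Adding the sampling term gives the claim.

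The main point requiring care is the first step: making the hard instance satisfy all of \Cref{ass:smooth-sc-sco} at once. Specifically, the Lipschitz bound must hold over the whole domain, which fixes the radius $R = L/(2\mu)$, and the population minimizer must be a zero-gradient point, which holds because $\mu_P$ lies in $\WW$. The remaining work is the reduction bookkeeping: checking that rescaling commutes with deletion, so that the unlearning guarantee transfers to $\talg'$.
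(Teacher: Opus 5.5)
Your proposal is correct and takes the same route as the paper: the paper takes the $\Omega(L^2/(\mu n))$ term from \cite{ny83} and gets the second term from \Cref{thm:post_unlearning_lb} via a standard SCO-to-mean-estimation reduction, and you write that reduction out explicitly (radius $R = L/(2\mu)$, loss $\frac{\mu}{2}\|w-z\|^2$, rescaling plus post-processing, then Markov to reach the theorem's probability-$2/3$ form). The same reduction applied to the $\Omega(1/n)$ part of \Cref{thm:post_unlearning_lb} would also give the first term, so the appeal to \cite{ny83} is optional.
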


The argument for \Cref{thm:post_unlearning_lb} is very similar to that of \Cref{sec:lb}, except that we rely on a weaker form of the packing technique.

\begin{lemma}
    \label{lemma:packing_bound}
    Let $P^*, P_1, \dots, P_k$ be distributions over a common space such that we can find disjoint events $E_1, \dots, E_k$ for which $P_i(E_i) \geq p$ as well as $P_i \approx_{\eps, \delta} P^*$ for each $i \in \{1, \dots, k\}$. Then
    \begin{align*}
        e^\eps \geq k(p - \delta).
    \end{align*}
\end{lemma}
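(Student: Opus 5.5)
The plan is to reproduce the proof of \Cref{lemma:squared_packing_method} with the squared-distance weighting removed. The two ingredients are the disjointness of the events $E_i$ and the one-sided $(\eps,\delta)$-indistinguishability inequality applied from $P_i$ to $P^*$.

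\emph{Step 1: pull mass back to $P^*$.} Fix $i \le k$. By the definition of $\approx_{\eps,\delta}$, applied with the event $E_i$ and in the direction from $P_i$ to $P^*$, we have
\[
    p \le P_i(E_i) \le e^{\eps} P^*(E_i) + \delta .
\]
Rearranging gives $P^*(E_i) \ge e^{-\eps}(p-\delta)$ for every $i$.

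\emph{Step 2: sum over the packing.} The events $E_1,\dots,E_k$ are disjoint and $P^*$ is a probability measure, so
\[
    1 \;\ge\; P^*\Big(\bigcup_{i=1}^k E_i\Big) \;=\; \sum_{i=1}^k P^*(E_i) \;\ge\; k\,e^{-\eps}(p-\delta).
\]
Multiplying through by $e^{\eps}$ gives $e^{\eps} \ge k(p-\delta)$.

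If $p \le \delta$ the right-hand side is nonpositive and the claim holds trivially. This case needs no separate treatment, since the inequality chain above remains valid.

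I do not expect a genuine obstacle here. The only point requiring care is to use the correct direction of the indistinguishability inequality: we need to bound $P_i(E_i)$ by $P^*(E_i)$, not the reverse. Measurability of the events is assumed in the statement.
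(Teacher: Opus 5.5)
Your proof is correct and is essentially the paper's own argument. Both use the one-sided bound $P_i(E_i) \le e^{\eps} P^*(E_i) + \delta$ to get $P^*(E_i) \ge e^{-\eps}(p-\delta)$, then sum over the disjoint $E_i$ against the total mass $1$ of $P^*$.
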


\begin{proof}
    Indeed, by disjointness, we have
    \begin{align*}
        1 \geq \sum_{i = 1}^k P^*(E_i)
            \geq \sum_{i = 1}^k e^{-\eps} (P_i(E_i) - \delta)
            \geq \sum_{i = 1}^k e^{-\eps} (p - \delta)
            = ke^{-\eps}(p - \delta).
    \end{align*}
\end{proof}

\begin{proof}[Proof of \Cref{thm:post_unlearning_lb}]
    Set $\eta := \frac{m}{5n}$. If $\sqrt{\alpha} \geq \frac{1}{2}\eta$, we are done, so assume that $\sqrt{\alpha} < \frac{1}{2}\eta$ and set $\tau := \frac{2\sqrt{\alpha}}{\eta} < 1$. By \Cref{lemma:packing}, we can find a $\tau$-separated $\cV \subseteq \bB_d$ of size
    \begin{align*}
        |\cV| \geq \frac{1}{2}\left( \frac{\eta}{2\sqrt\alpha} \right)^d.
    \end{align*}

    Now, for each $v \in \cV$, recall the hard distribution $P_{v, \eta}$ and corresponding delete request $U_v(Z)$ as in \Cref{lemma:contaminated_deletion}.
    As in the proof of \Cref{thm:lb}, we have that $\talg(U_v(Z), Z)|_{Z \sim P_{v, \eta}^n} \approx_{\eps, \delta + 2^{-m}} \talg(\varnothing, \textbf{0}_{n - m})$ because $\talg$ is $(\eps, \delta)$-unlearning.

    On the other hand, by our assumption on the mean estimation error, for every $v \in \cV$ we have
    \begin{align*}
        \frac{2}{3}
            \leq \mP_{\talg, Z \sim P_{v,\eta}^n}\left( \lVert \talg(U_v(Z), Z) - \eta v \rVert \leq \sqrt\alpha \right).
    \end{align*}
    As the $v \in \cV$ are $\frac{2\sqrt{\alpha}}{\eta}$-separated, the $\{ w : \lVert w - \eta v \rVert \leq \sqrt{\alpha} \}$ are disjoint and hence \Cref{lemma:packing_bound} yields
    \begin{align*}
        e^\eps
            \geq |\cV|\left( \frac{2}{3} - \delta - 2^{-m} \right)
            \geq \frac{1}{42}\left( \frac{\eta}{2\sqrt\alpha} \right)^d
            \geq \left( \frac{\eta}{84\sqrt\alpha} \right)^d
            = \left( \frac{m}{420n\sqrt\alpha} \right)^d.
    \end{align*}
    In particular, we have
    $\alpha \gtrsim \left( \frac{m}{n} \right)^2 e^{-2\eps/d}$.
\end{proof}

\section{Retrain-from-scratch Upper Bound}
\label{app:rfs}

We first record the performance of the most basic exact-unlearning algorithm:
retraining from scratch. Given a unlearning request \(U\subseteq Z\), define
\[
    \talg_{\mathrm{RFS}}(U,Z)
    :=
    \hat w_{Z\setminus U}
    \in
    \argmin_{w\in\WW}\widehat F_{Z\setminus U}(w),
\]
where
\[
    \widehat F_{Z\setminus U}(w)
    :=
    \frac{1}{|Z\setminus U|}
    \sum_{z_i\in Z\setminus U} f(w,z_i).
\]
This algorithm ignores the original model and recomputes an empirical risk minimizer on the retained data. Its main drawback is that it requires storing the full dataset.

\begin{theorem}[Retrain-from-scratch upper bound]
    \label{thm:rfs}
    Let $f$ satisfy \Cref{ass:smooth-sc-sco} and let \(\kappa=\beta/\mu\). Then retraining from scratch is exact \(0\)-unlearning. Moreover,
    \[
        \E_{Z\sim P^n}
        \left[
        \max_{U\subseteq Z:\, |U|\le m}
        \left(
        F_P(\hat w_{Z\setminus U})-F_P^*
        \right)
        \right]
        \lesssim
        \kappa\frac{L^2}{\mu}
        \left(
            \frac{1}{n}
            +
            \left(\frac{m}{n}\right)^2
        \right).
    \]
\end{theorem}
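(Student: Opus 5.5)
Exact $0$-unlearning is immediate. The output $\hat w_{Z\setminus U}$ depends on $(U,Z)$ only through $Z\setminus U$. Hence $\talg_{\mathrm{RFS}}(U,Z)$ and $\talg_{\mathrm{RFS}}(\varnothing,Z\setminus U)$ are the same deterministic point, and \Cref{def:eps-unlearning} holds with $\varepsilon=\delta=0$. If $\argmin$ ties are a concern, fix a deterministic tie-breaking rule; strong convexity makes the minimizer unique anyway.

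For the risk bound we work pointwise in $Z$ before taking expectations. Fix $Z\sim P^n$ and any $U\subseteq Z$ with $|U|\le m$. As in the proof of \Cref{thm:core-swap-upper}, smoothness of $F_P$ together with $\nabla F_P(w_P^*)=0$ gives $F_P(w)-F_P^*\le \frac\beta2\|w-w_P^*\|^2$. Applying $\|a+b\|^2\le 2\|a\|^2+2\|b\|^2$ then yields
\[
F_P(\hat w_{Z\setminus U})-F_P^*\le \beta\|\hat w_{Z\setminus U}-\hat w_Z\|^2+\beta\|\hat w_Z-w_P^*\|^2 .
\]
The second term does not depend on $U$. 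This is the key point: taking the maximum over $U$ only affects the first term, so we never need a uniform bound over the exponentially many subsets $U$.

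The first term is controlled deterministically by the stability bound already used in \Cref{prop:core-swap-unlearning} (c.f.\ \cite[Lemma 6]{sekhari2021remember}). It gives $\|\hat w_Z-\hat w_{Z\setminus U}\|\le \frac{2L|U|}{\mu|Z\setminus U|}\le\frac{2Lm}{\mu(n-m)}$, valid for every admissible $U$ simultaneously. Since $n-m=\Omega(n)$, we get
\[
\max_{|U|\le m}\beta\|\hat w_{Z\setminus U}-\hat w_Z\|^2\lesssim \kappa\frac{L^2}{\mu}\left(\frac mn\right)^2 .
\]
For the second term, \Cref{lem:erm-distance} with $N=n$ gives $\E\|\hat w_Z-w_P^*\|^2=O(L^2/(\mu^2 n))$, contributing $\kappa L^2/(\mu n)$. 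Summing the two contributions gives the claimed bound.

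The only real obstacle is the maximum over $U$ sitting inside the expectation. Comparing to the full-data ERM $\hat w_Z$ resolves it, because the random, data-dependent part then becomes independent of $U$ and the $U$-dependent part is bounded deterministically. If the stability lemma had to be rederived, the plan is to compare first-order optimality conditions of $\widehat F_Z$ and $\widehat F_{Z\setminus U}$. Writing $\widehat F_Z=\frac{|Z\setminus U|}{n}\widehat F_{Z\setminus U}+\frac1n\sum_{z\in U}f(\cdot,z)$, strong convexity and $L$-Lipschitzness give the $2L|U|/(\mu|Z\setminus U|)$ bound, though even a cruder constant would suffice here.
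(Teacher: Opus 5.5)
Your proposal is correct and follows the paper's proof essentially step for step. Both use the same smoothness-plus-stationarity reduction to $\frac{\beta}{2}\|\hat w_{Z\setminus U}-w_P^*\|^2$, then split around the full-data ERM $\hat w_Z$ so that the maximum over $U$ falls only on the deterministic stability term, and bound $\E\|\hat w_Z-w_P^*\|^2=O(L^2/(\mu^2 n))$ via the standard ERM excess-risk bound and strong convexity. The only cosmetic difference is the stability bound: you use denominator $\mu(n-m)$ and invoke the standing assumption $n-m=\Omega(n)$, whereas the paper quotes it directly as $\lesssim Lm/(\mu n)$, which also holds without that assumption if you use strong convexity of $\widehat F_Z$ as well.
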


\begin{proof}
    The exact-unlearning guarantee is immediate. After receiving \(U\), the algorithm outputs \(\hat w_{Z\setminus U}\), which is exactly the output obtained by running the learning algorithm directly on the retained dataset \(Z\setminus U\). Hence the two output distributions are identical.

    It remains to prove the excess-risk bound. Let
    \[
        \hat w_Z\in\argmin_{w\in\WW}\widehat F_Z(w),
        \qquad
        w^*\in\argmin_{w\in\WW}F_P(w).
    \]
    By \(\beta\)-smoothness and the stationarity assumption, we have
    \[
        F_P(w)-F_P^*
        \le
        \frac{\beta}{2}\|w-w^*\|^2
        \qquad
        \forall w\in\WW.
    \]
    Therefore,
    \begin{align}
        &\E
        \left[
        \max_{U\subseteq Z:\, |U|\le m}
        \left(
        F_P(\hat w_{Z\setminus U})-F_P^*
        \right)
        \right]
        \nonumber\\
        &\qquad\le
        \frac{\beta}{2}
        \E
        \left[
        \max_{U\subseteq Z:\, |U|\le m}
        \|\hat w_{Z\setminus U}-w^*\|^2
        \right]
        \nonumber\\
        &\qquad\le
        \beta
        \E\|\hat w_Z-w^*\|^2
        +
        \beta
        \E
        \left[
        \max_{U\subseteq Z:\, |U|\le m}
        \|\hat w_{Z\setminus U}-\hat w_Z\|^2
        \right],
        \label{eq:rfs-basic-decomposition}
    \end{align}
    where the last step uses \(\|a+b\|^2\le 2\|a\|^2+2\|b\|^2\).

    We bound the two terms in~\eqref{eq:rfs-basic-decomposition}. First, standard stability/generalization bounds for \(L\)-Lipschitz, \(\mu\)-strongly convex ERM imply
    \[
        \E\!\left[F_P(\hat w_Z)-F_P^*\right]
        \lesssim
        \frac{L^2}{\mu n}.
    \]
    By \(\mu\)-strong convexity of \(F\),
    \[
        \frac{\mu}{2}\E\|\hat w_Z-w^*\|^2
        \le
        \E\!\left[F_P(\hat w_Z)-F_P^*\right],
    \]
    and hence
    \begin{equation}
        \E\|\hat w_Z-w^*\|^2
        \lesssim
        \frac{L^2}{\mu^2 n}.
        \label{eq:rfs-stat-term}
    \end{equation}

    Second, the unlearning stability bound for strongly convex ERM gives, uniformly over all \(U\subseteq Z\) with \(|U|\le m\),
    \[
        \|\hat w_{Z\setminus U}-\hat w_Z\|
        \lesssim
        \frac{Lm}{\mu n}.
    \]
    For example, this is precisely the unlearning stability estimate used in~\cite[Lemma 6]{sekhari2021remember}. Therefore,
    \begin{equation}
        \E
        \left[
        \max_{U\subseteq Z:\, |U|\le m}
        \|\hat w_{Z\setminus U}-\hat w_Z\|^2
        \right]
        \lesssim
        \frac{L^2m^2}{\mu^2n^2}.
        \label{eq:rfs-unlearning-term}
    \end{equation}

    Combining~\eqref{eq:rfs-basic-decomposition}, \eqref{eq:rfs-stat-term}, and~\eqref{eq:rfs-unlearning-term} yields
    \[
        \E
        \left[
        \max_{U\subseteq Z:\, |U|\le m}
        \left(
        F_P(\hat w_{Z\setminus U})-F_P^*
        \right)
        \right]
        \lesssim
        \beta\frac{L^2}{\mu^2}
        \left(
            \frac{1}{n}
            +
            \left(\frac{m}{n}\right)^2
        \right).
    \]
    Since \(\beta/\mu=\kappa\), this is
    \[
        \lesssim
        \kappa\frac{L^2}{\mu}
        \left(
            \frac{1}{n}
            +
            \left(\frac{m}{n}\right)^2
        \right),
    \]
    as claimed.
\end{proof}

\begin{remark}
    The proof uses only Lipschitzness, strong convexity, smoothness on the constrained domain, as well as stationarity of the minimizer. Under these assumptions, the condition-number factor multiplies both the usual statistical term and the unlearning term. Removing the factor \(\kappa\) from the \(1/n\) term would require an additional condition such as a smooth self-bounding inequality.
\end{remark}

\section{Corrected analysis of the ERM warm-start algorithm of
\texorpdfstring{\cite{youseffutility}}{Youssef et al.}}
\label{app:youseff-corrected}

In this section we revisit the ERM-based warm-start unlearning algorithm of~\cite{youseffutility}. At a high level, their algorithm is very close to retraining from scratch: after receiving a unlearning request, it approximately minimizes the empirical risk on the retained dataset, using the original trained model as a warm start, and then adds noise to certify unlearning. We give a corrected utility analysis showing that this approach achieves a population-risk bound matching retraining from scratch, up to the optimization error and the noise variance\footnote{We note that the authors have also corrected the analysis in a subsequent preprint~\cite{youseffutility_arxiv_v3}.}.

We use our notation rather than the notation of~\cite{youseffutility}. Let \(Z=(z_1,\ldots,z_n)\sim P^n\), and for a unlearning set \(U\subseteq Z\), let
\[
    \widehat F_{Z\setminus U}(w)
    :=
    \frac{1}{|Z\setminus U|}
    \sum_{z_i\in Z\setminus U} f(w,z_i),
    \qquad
    \hat w_{Z\setminus U}
    \in
    \argmin_{w\in\WW}\widehat F_{Z\setminus U}(w).
\]

\paragraph{The algorithm.}
The learner first computes an empirical risk minimizer \(\hat w_Z\) on the full dataset and stores the data \(Z\). Upon receiving a unlearning request \(U\), the unlearning algorithm runs an optimization method, initialized at \(\hat w_Z\), on the retained empirical objective \(\widehat F_{Z\setminus U}\). Let \(w_U\) denote the resulting approximate retained-data ERM. Finally, the algorithm outputs
\begin{equation}
    \label{eq:app-youseff-output}
    \tilde w_U
    :=
    \Pi_{\WW}(w_U+v),
\end{equation}
where \(v\) is the noise used to certify unlearning. When \(w_U=\hat w_{Z\setminus U}\) and \(v=0\), this is exactly retraining from scratch.

The theorem below isolates the utility guarantee of this template. It applies to the warm-start algorithm of~\cite{youseffutility} once their optimization accuracy and noise calibration are substituted.

\begin{theorem}[Corrected utility bound for ERM warm-start unlearning]
    \label{thm:youseff-corrected}
    Let $f$ satisfy \Cref{ass:smooth-sc-sco} and let \(\kappa=\beta/\mu\). Suppose that for every unlearning set \(U\subseteq Z\) with \(|U|\le m\), the unlearning-time optimizer returns \(w_U\in\WW\) satisfying
    \begin{equation}
    \label{eq:app-youseff-empirical-accuracy}
        \widehat F_{Z\setminus U}(w_U)
        -
        \widehat F_{Z\setminus U}(\hat w_{Z\setminus U})
        \le \gamma .
    \end{equation}
    Assume also that the noise \(v\) in~\eqref{eq:app-youseff-output} satisfies
    \[
        \E\|v\|^2 \le \sigma^2 .
    \]
    Then
    \[
        \E_{Z\sim P^n}
        \left[
        \sup_{U\subseteq Z:\,|U|\le m}
        \E\!\left[
            F_P(\tilde w_U)-F_P^*
            \mid Z,U
        \right]
        \right]
        \lesssim
        \kappa\frac{L^2}{\mu}
        \left(
            \frac1n+\left(\frac{m}{n}\right)^2
        \right)
        +
        \frac{\beta}{\mu}\gamma
        +
        \beta\sigma^2 .
    \]
    In particular, if
    \[
        \gamma
        \lesssim
        \frac{L^2}{\mu}
        \left(
            \frac1n+\left(\frac{m}{n}\right)^2
        \right)
        \qquad\text{and}\qquad
        \sigma^2
        \lesssim
        \frac{L^2}{\mu^2}
        \left(
            \frac1n+\left(\frac{m}{n}\right)^2
        \right),
    \]
    then the algorithm achieves the retraining from scratch rate
    \[
        \E_{Z\sim P^n}
        \left[
        \sup_{U\subseteq Z:\,|U|\le m}
        \E\!\left[
            F_P(\tilde w_U)-F_P^*
            \mid Z,U
        \right]
        \right]
        \lesssim
        \kappa\frac{L^2}{\mu}
        \left(
            \frac1n+\left(\frac{m}{n}\right)^2
        \right).
    \]
\end{theorem}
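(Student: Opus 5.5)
We plan to mirror the proof of \Cref{thm:rfs}, replacing the exact retained-data minimizer \(\hat w_{Z\setminus U}\) by the noisy, approximate output \(\tilde w_U\). Fix \(Z\) and \(U\) with \(|U|\le m\), and write \(w^*=w_P^*\). By \(\beta\)-smoothness and the stationarity condition in \Cref{ass:smooth-sc-sco}, \(F_P(\tilde w_U)-F_P^*\le \frac{\beta}{2}\|\tilde w_U-w^*\|^2\). Since \(w^*\in\WW\) and the projection \(\Pi_\WW\) is nonexpansive, \(\|\tilde w_U-w^*\|\le \|w_U+v-w^*\|\). We then split with \(\|a+b+c+d\|^2\le 4(\|a\|^2+\|b\|^2+\|c\|^2+\|d\|^2)\) into four pieces: the noise \(v\), the optimization error \(w_U-\hat w_{Z\setminus U}\), the unlearning shift \(\hat w_{Z\setminus U}-\hat w_Z\), and the statistical error \(\hat w_Z-w^*\).

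The pieces are handled as follows.
\begin{enumerate}
    \item The noise contributes \(\beta\,\E[\|v\|^2\mid Z,U]\le\beta\sigma^2\). Here we assume the noise bound holds conditionally on \(Z,U\), which is how the calibration works in practice.
    \item For the optimization error, \(\widehat F_{Z\setminus U}\) is \(\mu\)-strongly convex on \(\WW\) and \(\hat w_{Z\setminus U}\) is its constrained minimizer. The first-order optimality condition \(\langle\nabla \widehat F_{Z\setminus U}(\hat w_{Z\setminus U}),w-\hat w_{Z\setminus U}\rangle\ge0\) for \(w\in\WW\) then gives the quadratic growth bound \(\frac{\mu}{2}\|w_U-\hat w_{Z\setminus U}\|^2\le\gamma\). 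This contributes \(\beta\cdot 2\gamma/\mu\lesssim\kappa\gamma\). The bound is deterministic, so it survives the supremum over \(U\).
    \item For the unlearning shift, the stability estimate of \cite[Lemma 6]{sekhari2021remember}, already used in \Cref{prop:core-swap-unlearning}, gives \(\|\hat w_{Z\setminus U}-\hat w_Z\|\le \frac{2Lm}{\mu(n-m)}\lesssim \frac{Lm}{\mu n}\), using \(n-m=\Omega(n)\). This holds uniformly over \(U\) and contributes \(\kappa\frac{L^2}{\mu}(m/n)^2\).
\end{enumerate}

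The only term that depends on the randomness of \(Z\) is the statistical one, \(\beta\|\hat w_Z-w^*\|^2\), and it does not depend on \(U\). So the supremum over \(U\) of the conditional expectation is bounded by this term plus the three deterministic bounds above. Taking \(\E_{Z\sim P^n}\) and applying \Cref{lem:erm-distance} gives \(\beta\cdot O(L^2/(\mu^2 n))=O(\kappa L^2/(\mu n))\).

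Summing the four contributions yields the first display. The ``in particular'' clause then follows by substitution: \(\kappa\gamma\) and \(\beta\sigma^2\) are each at most \(\kappa\frac{L^2}{\mu}(\frac1n+(m/n)^2)\) under the stated conditions.

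The main obstacle is keeping the supremum over \(U\) outside the expectation over \(Z\) while staying inside it with respect to the noise. This is also where we suspect the original argument of \cite{youseffutility} goes wrong, for instance by applying a generalization bound to \(\hat w_{Z\setminus U}\) with a data-dependent \(U\). Routing everything through \(\hat w_Z\), whose error is \(U\)-independent, together with the deterministic stability bound, resolves this. We would also check that the quadratic growth step is valid for the constrained minimizer.
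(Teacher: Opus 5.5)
Your proposal is correct and follows essentially the same route as the paper. Both arguments use the smoothness-plus-stationarity reduction, nonexpansiveness of the projection, and the same four-way split through $\hat w_Z$ and $\hat w_{Z\setminus U}$; both bound the optimization error by quadratic growth, bound the deletion shift by uniform stability, and apply the ERM statistical bound only to the $U$-independent term (with the noise bound likewise used conditionally on $Z,U$). As a side note, your guess about the flaw in \cite{youseffutility} differs from the paper's diagnosis, which is that they dropped the first-order term $\langle \nabla F_P(\theta_S^*), w-\theta_S^*\rangle$ when expanding the population risk around an empirical minimizer; this does not affect your proof.
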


\begin{proof}
    Fix \(Z\) and \(U\subseteq Z\) with \(|U|\le m\). Let
    \[
        w^*\in\argmin_{w\in\WW}F_P(w).
    \]
    By \(\beta\)-smoothness and the stationarity assumption,
    \begin{equation}
    \label{eq:app-youseff-smooth-upper}
        F_P(w)-F_P^*
        \le
        \frac{\beta}{2}\|w-w^*\|^2
        \qquad
        \forall w\in\WW.
    \end{equation}
    Using nonexpansiveness of projection and the inequality
    \(\|a_1+\cdots+a_4\|^2\le 4\sum_{j=1}^4\|a_j\|^2\), we obtain
    \begin{align}
        \E\!\left[F_P(\tilde w_U)-F_P^*\mid Z,U\right]
        &\le
        \frac{\beta}{2}
        \E\!\left[
            \|\tilde w_U-w^*\|^2
            \mid Z,U
        \right]
        \nonumber\\
        &\lesssim
        \beta\|\hat w_Z-w^*\|^2
        +
        \beta\|\hat w_{Z\setminus U}-\hat w_Z\|^2
        \nonumber\\
        &\qquad
        +
        \beta\|w_U-\hat w_{Z\setminus U}\|^2
        +
        \beta\E\|v\|^2 .
        \label{eq:app-youseff-decomposition}
    \end{align}

    We now bound each term. First, as in the proof of \Cref{thm:rfs}, standard stability/generalization bounds for \(L\)-Lipschitz, \(\mu\)-strongly convex ERM imply
    \[
        \E_Z\|\hat w_Z-w^*\|^2
        \lesssim
        \frac{L^2}{\mu^2 n}.
    \]
    Second, the unlearning stability bound for strongly convex ERM gives, uniformly over all \(U\subseteq Z\) with \(|U|\le m\),
    \[
        \|\hat w_{Z\setminus U}-\hat w_Z\|
        \lesssim
        \frac{Lm}{\mu n}.
    \]
    Third, by \(\mu\)-strong convexity of \(\widehat F_{Z\setminus U}\) and the empirical accuracy condition~\eqref{eq:app-youseff-empirical-accuracy},
    \[
        \frac{\mu}{2}
        \|w_U-\hat w_{Z\setminus U}\|^2
        \le
        \widehat F_{Z\setminus U}(w_U)
        -
        \widehat F_{Z\setminus U}(\hat w_{Z\setminus U})
        \le
        \gamma,
    \]
    and hence
    \[
        \|w_U-\hat w_{Z\setminus U}\|^2
        \le
        \frac{2\gamma}{\mu}.
    \]
    Finally, \(\E\|v\|^2\le\sigma^2\) by assumption.

    Substituting these bounds into~\eqref{eq:app-youseff-decomposition}, taking the supremum over \(U\), and then taking expectation over \(Z\), gives
    \[
        \E_Z
        \left[
        \sup_{U\subseteq Z:\,|U|\le m}
        \E\!\left[
            F_P(\tilde w_U)-F_P^*
            \mid Z,U
        \right]
        \right]
        \lesssim
        \beta\frac{L^2}{\mu^2}
        \left(
            \frac1n+\left(\frac{m}{n}\right)^2
        \right)
        +
        \frac{\beta}{\mu}\gamma
        +
        \beta\sigma^2 .
    \]
    Since \(\beta/\mu=\kappa\), the claimed bound follows.
\end{proof}

\paragraph{Where the prior analysis breaks.}
The proof of~\cite[Proposition 1]{youseffutility} attempts to convert an empirical-risk guarantee on the retained sample into a population-risk guarantee. The key error occurs in the line after the authors invoke smoothness of the loss. In our notation, smoothness can only give
\[
    F_P(w)-F_P(\theta_S^*)
    \le
    \langle \nabla F_P(\theta_S^*), w-\theta_S^*\rangle
    +
    \frac{\beta}{2}\|w-\theta_S^*\|^2 ,
\]
where \(\theta_S^*\) is the empirical minimizer used in their argument. The proof then effectively treats the linear term
\[
    \langle \nabla F_P(\theta_S^*), w-\theta_S^*\rangle
\]
as zero. This is not justified: \(\theta_S^*\) minimizes the empirical risk, not the population risk, so in general
\[
    \nabla F_P(\theta_S^*)\neq 0.
\]
Moreover, in constrained optimization, even the population minimizer need not have zero gradient. Thus the omitted first-order term can dominate the claimed bound, and the proposition does not establish the population-risk guarantee stated in~\cite{youseffutility}.

The corrected analysis above avoids this step. Instead of expanding the population risk around an empirical minimizer and dropping the first-order term, we compare the warm-start output to the exact retained-data ERM, use strong convexity to convert empirical optimization error into parameter error, and then apply the same stability argument as retraining from scratch. This yields a retraining-level rate, plus the explicit contributions of optimization error and unlearning noise.

\section{A Sharper Analysis of the Newton-step Algorithm of
\texorpdfstring{\cite{sekhari2021remember}}{Sekhari et al.}}
\label{app:saks-improved}

We revisit the Newton-step unlearning algorithm of~\cite{sekhari2021remember}. Their algorithm was introduced as a way to improve over generic differentially private baselines for smooth strongly convex losses. In this section, we show that the same algorithm admits a sharper population-risk analysis than the one originally given.

Throughout this section, assume that for every \(z\in\ZZ\), the loss \(f(\cdot,z)\) is \(L\)-Lipschitz, \(\mu\)-strongly convex, and \(\beta\)-smooth over \(\WW\). We write \(\kappa=\beta/\mu\). We additionally assume, as in~\cite{sekhari2021remember}, that the Hessian is \(M\)-Lipschitz:
\begin{equation}
    \label{eq:app-saks-hessian-lipschitz}
    \|\nabla^2 f(w,z)-\nabla^2 f(w',z)\|
    \le M\|w-w'\|
    \qquad
    \forall w,w'\in\WW,\ z\in\ZZ .
\end{equation}
For a dataset \(Z=(z_1,\ldots,z_n)\), write
\[
    \widehat F_Z(w)
    :=
    \frac1n\sum_{i=1}^n f(w,z_i),
    \qquad
    \hat w_Z\in\argmin_{w\in\WW}\widehat F_Z(w).
\]

The algorithm of~\cite{sekhari2021remember} initializes at the full-data ERM \(\hat w_Z\). After receiving a unlearning set \(U\subseteq Z\), it takes one Newton step with respect to the retained empirical objective. Equivalently, the deterministic part of the update can be written as
\begin{equation}
    \label{eq:app-saks-newton-step}
    \bar w_U
    :=
    \hat w_Z
    -
    \nabla^2 \widehat F_{Z\setminus U}(\hat w_Z)^{-1}
    \nabla \widehat F_{Z\setminus U}(\hat w_Z),
\end{equation}
where
\[
    \widehat F_{Z\setminus U}(w)
    :=
    \frac{1}{|Z\setminus U|}
    \sum_{z_i\in Z\setminus U} f(w,z_i).
\]
The unlearning algorithm outputs a noisy version of this update,
\begin{equation}
    \label{eq:app-saks-noisy-output}
    \tilde w_U
    :=
    \Pi_{\WW}(\bar w_U+v),
\end{equation}
where \(v\) is calibrated Gaussian noise for \((\varepsilon,\delta)\)-unlearning, or calibrated multivariate Laplace noise for pure \(\varepsilon\)-unlearning. The projection is post-processing and can only improve the distance-to-\(w^*\) bounds used below.

The original analysis of~\cite{sekhari2021remember} gives, up to constants,
\[
    \frac{L^2}{\mu}\frac{m}{n}
    +
    \frac{L^3 M}{\mu^3}
    \frac{m^2\sqrt{d\log(1/\delta)}}{\varepsilon n^2}
\]
for the Gaussian-noise \((\varepsilon,\delta)\)-unlearning version when $m \le n/2$. When $\delta = 0$, substituting Laplace noise for Gaussian in their algorithm yields the same bound but with $\sqrt{d \log(1/\delta)}$ replaced by $d$. We show that a more direct population-risk argument improves the generalization term from \(m/n\) to \(1/n+(m/(n-m))^2\), which is \(1/n+(m/n)^2\) when \(m\le n/2\), and squares the privacy contribution when converting parameter error to excess risk.

\begin{theorem}[Improved analysis of the Newton-step algorithm]
    \label{thm:saks-improved}
    Let $f$ satisfy \Cref{ass:smooth-sc-sco} and additionally assume that \(f(\cdot,z)\) has \(M\)-Lipschitz Hessian for every \(z\in\ZZ\). Let \(\tilde w_U\) be the Newton-step unlearning algorithm of~\cite{sekhari2021remember} for unlearning sets of size at most \(m<n\).

    For the Gaussian-noise version calibrated to satisfy \((\varepsilon,\delta)\)-unlearning,
    \[
        \sup_{U\subseteq Z:\,|U|\le m}
        \E\!\left[F_P(\tilde w_U)-F_P^*\right]
        \lesssim
        \kappa\frac{L^2}{\mu}
        \left(
            \frac1n+\left(\frac{m}{n-m}\right)^2
        \right)
        +
        \beta
        \left(\frac{M L^2}{\mu^3}\right)^2
        \frac{d\,m^4\log(1/\delta)}{\varepsilon^2 n^4}.
    \]
    For the pure \(\varepsilon\)-unlearning version obtained by replacing Gaussian noise with multivariate Laplace noise,
    \[
        \sup_{U\subseteq Z:\,|U|\le m}
        \E\!\left[F_P(\tilde w_U)-F_P^*\right]
        \lesssim
        \kappa\frac{L^2}{\mu}
        \left(
            \frac1n+\left(\frac{m}{n-m}\right)^2
        \right)
        +
        \beta
        \left(\frac{M L^2}{\mu^3}\right)^2
        \frac{d^2m^4}{\varepsilon^2 n^4}.
    \]
    In particular, if \(m\le n/2\), then the deterministic part of both bounds simplifies to
    \[
        \kappa\frac{L^2}{\mu}
        \left(
            \frac1n+\left(\frac{m}{n}\right)^2
        \right).
    \]
\end{theorem}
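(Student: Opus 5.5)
We bound the excess risk through parameter distance. Stationarity and $\beta$-smoothness give $F_P(w)-F_P^*\le\frac{\beta}{2}\|w-w^*\|^2$, and projection onto $\WW$ is nonexpansive with $w^*\in\WW$. So it suffices to bound $\E\|\bar w_U+v-w^*\|^2$, which we split as
\[
\|\bar w_U+v-w^*\|^2\lesssim \|\hat w_Z-w^*\|^2+\|\hat w_{Z\setminus U}-\hat w_Z\|^2+\|\bar w_U-\hat w_{Z\setminus U}\|^2+\|v\|^2 .
\]
The first two terms are handled exactly as in the proof of \Cref{thm:rfs}. By \Cref{lem:erm-distance}, $\E\|\hat w_Z-w^*\|^2\lesssim L^2/(\mu^2 n)$. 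The stability bound of \cite[Lemma 6]{sekhari2021remember} gives $\|\hat w_{Z\setminus U}-\hat w_Z\|\le 2Lm/(\mu(n-m))$ uniformly over $|U|\le m$. Multiplying by $\beta$ produces the deterministic part $\kappa\frac{L^2}{\mu}(\frac1n+(\frac{m}{n-m})^2)$.

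The key step is the Newton-step error $\|\bar w_U-\hat w_{Z\setminus U}\|$, which is the standard quadratic-convergence estimate. Write $H:=\nabla^2\widehat F_{Z\setminus U}(\hat w_Z)$, which satisfies $H\succeq\mu I$. Set $g(w):=\nabla\widehat F_{Z\setminus U}(w)$, so that $g(\hat w_{Z\setminus U})=0$. I would treat the minimizer as interior here, as in \cite{sekhari2021remember}; the constrained case is the main technical subtlety, and I would adopt their standing assumption on it. Then
\[
\bar w_U-\hat w_{Z\setminus U}=H^{-1}\Bigl[H(\hat w_Z-\hat w_{Z\setminus U})-\bigl(g(\hat w_Z)-g(\hat w_{Z\setminus U})\bigr)\Bigr].
\]
By the integral form of the mean value theorem, the bracket is $\int_0^1[H-\nabla^2\widehat F_{Z\setminus U}(w_t)](\hat w_Z-\hat w_{Z\setminus U})\,dt$. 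The $M$-Lipschitz Hessian bounds this by $\frac{M}{2}\|\hat w_Z-\hat w_{Z\setminus U}\|^2$. Hence
\[
\|\bar w_U-\hat w_{Z\setminus U}\|\le\frac{M}{2\mu}\Bigl(\frac{2Lm}{\mu(n-m)}\Bigr)^2\lesssim \frac{ML^2}{\mu^3}\frac{m^2}{(n-m)^2}.
\]
After squaring this is of order $(m/n)^4$ with a bounded constant. It is absorbed into the deterministic term once $ML^2/\mu^3\cdot(m/n)^2\lesssim L/\mu$, or otherwise kept as a lower-order term. I expect it to be dominated, but I would state the absorption condition explicitly.

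For the noise, the mechanism of \cite{sekhari2021remember} must hide the residual $\bar w_U-\hat w_{Z\setminus U}$. Its sensitivity is $\Delta\lesssim\frac{ML^2}{\mu^3}\frac{m^2}{n^2}$ by the bound above (for $m\le n/2$; generally with $n-m$). For Gaussian noise $\sigma=\Delta\sqrt{2\log(1.25/\delta)}/\varepsilon$, so $\E\|v\|^2=d\sigma^2\lesssim(\frac{ML^2}{\mu^3})^2\frac{d\,m^4\log(1/\delta)}{\varepsilon^2n^4}$. For multivariate Laplace noise with density $\propto e^{-\varepsilon\|v\|/\Delta}$, $\|v\|\sim\mathrm{Gamma}(d,\Delta/\varepsilon)$, so $\E\|v\|^2=d(d+1)\Delta^2/\varepsilon^2$, which gives the $d^2$ version. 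Multiplying by $\beta$ and combining all four terms yields both stated bounds. For $m\le n/2$, $(m/(n-m))^2\le 4(m/n)^2$ gives the simplification.
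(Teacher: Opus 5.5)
Your outline is sound, but as written it does not establish the stated bound. Because you pivot through $\hat w_{Z\setminus U}$, the Newton residual $\|\bar w_U-\hat w_{Z\setminus U}\|^2$ enters the \emph{utility} decomposition. You control it only through the quadratic-convergence estimate, which adds a term $\beta\bigl(\frac{ML^2}{\mu^3}\bigr)^2\bigl(\frac{m}{n-m}\bigr)^4$ that does not appear in the theorem.

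That term is not dominated in general. Its ratio to $\kappa\frac{L^2}{\mu}\bigl(\frac{m}{n-m}\bigr)^2$ is $\bigl(\frac{ML}{\mu^2}\frac{m}{n-m}\bigr)^2$. Its ratio to the noise term is of order $\varepsilon^2/d^2$ (Laplace) or $\varepsilon^2/(d\log(1/\delta))$ (Gaussian). So for large $M$ and large $\varepsilon$ it is the leading term. Stating an ``absorption condition'' therefore proves a weaker theorem with an extra hypothesis.

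The missing observation is that the Newton displacement can be bounded without any Hessian-Lipschitz constant. Under the same interior assumption you already use, $\nabla\widehat F_Z(\hat w_Z)=0$. Hence $\nabla\widehat F_{Z\setminus U}(\hat w_Z)=-\frac{1}{|Z\setminus U|}\sum_{z\in U}\nabla f(\hat w_Z,z)$, and so $\|\bar w_U-\hat w_Z\|\le\|H^{-1}\|\,\|\nabla\widehat F_{Z\setminus U}(\hat w_Z)\|\le\frac{Lm}{\mu(n-m)}$.

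The paper pivots on $\hat w_Z$ with the three-term split $\|\bar w_U-\hat w_Z\|^2+\|\hat w_Z-w^*\|^2+\|v\|^2$. In that route $M$ enters only through the noise calibration. Alternatively, you can keep your four-term split and cap the residual by the triangle inequality, $\|\bar w_U-\hat w_{Z\setminus U}\|\le\|\bar w_U-\hat w_Z\|+\|\hat w_Z-\hat w_{Z\setminus U}\|\le\frac{3Lm}{\mu(n-m)}$. This absorbs the residual into the deterministic term unconditionally.

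Your quadratic estimate is then needed only for the sensitivity $\Delta$. Your noise computations are correct and agree with the paper: $d\sigma^2$ for the Gaussian, and $d(d+1)\Delta^2/\varepsilon^2$ for the Laplace-type density.
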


\begin{proof}
    Fix any unlearning set \(U\subseteq Z\) with \(|U|\le m\). The bounds below are uniform over such \(U\), so taking the supremum gives the theorem.

    Let \(w^*\in\argmin_{w\in\WW}F_P(w)\). By \(\beta\)-smoothness and the stationarity assumption,
    \begin{equation}
    \label{eq:app-saks-smooth-upper}
        F_P(w)-F_P^*
        \le
        \frac{\beta}{2}\|w-w^*\|^2
        \qquad
        \forall w\in\WW .
    \end{equation}
    Using nonexpansiveness of projection and \(\|a+b\|^2\le 2\|a\|^2+2\|b\|^2\), we obtain
    \begin{align}
        \E\!\left[F_P(\tilde w_U)-F_P^*\right]
        &\le
        \frac{\beta}{2}\E\|\tilde w_U-w^*\|^2 \nonumber\\
        &\le
        \beta\E\|\bar w_U-\hat w_Z\|^2
        +
        \beta\E\|\hat w_Z-w^*\|^2
        +
        \beta\E\|v\|^2 .
        \label{eq:app-saks-risk-decomposition}
    \end{align}

    We now bound the three terms in~\eqref{eq:app-saks-risk-decomposition}. First, by the definition of the Newton step~\eqref{eq:app-saks-newton-step},
    \[
        \bar w_U-\hat w_Z
        =
        -
        \nabla^2 \widehat F_{Z\setminus U}(\hat w_Z)^{-1}
        \nabla \widehat F_{Z\setminus U}(\hat w_Z).
    \]
    The proof of~\cite[Lemma 9]{sekhari2021remember} gives, uniformly over all \(U\subseteq Z\) with \(|U|\le m\),
    \begin{equation}
    \label{eq:app-saks-newton-displacement}
        \|\bar w_U-\hat w_Z\|
        \lesssim
        \frac{Lm}{\mu(n-m)}.
    \end{equation}
    Consequently,
    \begin{equation}
    \label{eq:app-saks-deterministic-term}
        \E\|\bar w_U-\hat w_Z\|^2
        \lesssim
        \frac{L^2m^2}{\mu^2(n-m)^2}.
    \end{equation}

    Second, standard stability/generalization bounds for \(L\)-Lipschitz, \(\mu\)-strongly convex ERM imply
    \begin{equation}
    \label{eq:app-saks-erm-excess}
        \E\!\left[F_P(\hat w_Z)-F_P^*\right]
        \lesssim
        \frac{L^2}{\mu n}.
    \end{equation}
    By \(\mu\)-strong convexity of \(F\),
    \[
        \frac{\mu}{2}\E\|\hat w_Z-w^*\|^2
        \le
        \E\!\left[F_P(\hat w_Z)-F_P^*\right].
    \]
    Combining this with~\eqref{eq:app-saks-erm-excess} gives
    \begin{equation}
    \label{eq:app-saks-statistical-term}
        \E\|\hat w_Z-w^*\|^2
        \lesssim
        \frac{L^2}{\mu^2 n}.
    \end{equation}

    Third, \cite{sekhari2021remember} show that the deterministic Newton-step map has unlearning sensitivity
    \begin{equation}
    \label{eq:app-saks-unlearning-sensitivity}
        \Delta
        \lesssim
        \frac{M L^2}{\mu^3}\frac{m^2}{n^2}.
    \end{equation}
    Thus, for the Gaussian mechanism calibrated to \((\varepsilon,\delta)\)-unlearning,
    \begin{equation}
    \label{eq:app-saks-gaussian-noise}
        \E\|v\|^2
        \lesssim
        \frac{d\,\Delta^2\log(1/\delta)}{\varepsilon^2}
        \lesssim
        \left(\frac{M L^2}{\mu^3}\right)^2
        \frac{d\,m^4\log(1/\delta)}{\varepsilon^2 n^4}.
    \end{equation}
    For the pure \(\varepsilon\)-unlearning version based on multivariate Laplace noise,
    \begin{equation}
    \label{eq:app-saks-laplace-noise}
        \E\|v\|^2
        \lesssim
        \frac{d^2\Delta^2}{\varepsilon^2}
        \lesssim
        \left(\frac{M L^2}{\mu^3}\right)^2
        \frac{d^2m^4}{\varepsilon^2 n^4}.
    \end{equation}

    Plugging~\eqref{eq:app-saks-deterministic-term}, \eqref{eq:app-saks-statistical-term}, and either~\eqref{eq:app-saks-gaussian-noise} or~\eqref{eq:app-saks-laplace-noise} into~\eqref{eq:app-saks-risk-decomposition} gives
    \[
        \E\!\left[F_P(\tilde w_U)-F_P^*\right]
        \lesssim
        \beta\frac{L^2}{\mu^2}
        \left(
            \frac1n+\left(\frac{m}{n-m}\right)^2
        \right)
        +
        \beta\E\|v\|^2.
    \]
    Since \(\beta L^2/\mu^2=\kappa L^2/\mu\), the claimed bounds follow. If \(m\le n/2\), then \((n-m)^{-1}\le 2n^{-1}\), giving the simplified deterministic term.
\end{proof}

\begin{remark}
    The improvement comes from analyzing population risk through the squared distance to the full-data ERM \(\hat w_Z\). The original analysis effectively pays a first-order generalization term of order \(m/n\). By instead using smoothness of \(F\), the deterministic displacement of the Newton update contributes quadratically, giving \((m/(n-m))^2\), while the full-data ERM contributes the usual \(1/n\) statistical term, up to the condition-number factor.
\end{remark}

\end{document}